\documentclass{article}

\usepackage{graphicx}
\usepackage{subcaption}
\usepackage{tabularx}

\usepackage{hyperref}

\usepackage[accepted]{icml2026}

\usepackage[utf8]{inputenc} 
\usepackage[T1]{fontenc}    
\usepackage{url}            
\usepackage{hyperref}
\usepackage{booktabs}       
\usepackage{amsfonts}       
\usepackage{nicefrac}       
\usepackage{microtype}      
\usepackage{xcolor}         

\usepackage{amsmath}
\usepackage{amssymb}
\usepackage{booktabs}
\usepackage{multirow}
\usepackage{multicol}
\usepackage{xcolor}
\usepackage{soul}
\usepackage[most]{tcolorbox}

\newtcolorbox{mybox}{
    colback=gray!10,      
    colframe=black!70,    
    width=0.485\textwidth,     
    arc=3mm,              
    boxrule=1.5pt,        
    left=15pt,            
    right=15pt,           
    top=10pt,             
    bottom=10pt,          
}

\usepackage{transparent}

\definecolor{darkblue}{rgb}{0,0.08,0.45}
\definecolor{cred}{HTML}{D62728}
\definecolor{cblue}{HTML}{1F77B4}
\definecolor{cgreen}{HTML}{79AB76}
\definecolor{cgrey}{rgb}{0.6,0.6,0.6}

\renewcommand{\mathbf}{\boldsymbol}

\makeatletter

\usepackage{amsmath,amsfonts,bm}

\def\eqref#1{equation~\ref{#1}}

\def\1{\bm{1}}

\DeclareMathAlphabet{\mathsfit}{\encodingdefault}{\sfdefault}{m}{sl}
\SetMathAlphabet{\mathsfit}{bold}{\encodingdefault}{\sfdefault}{bx}{n}

\usepackage{algorithmic}

\usepackage{amsmath}
\usepackage{amssymb}
\usepackage{mathtools}
\usepackage{amsthm}
\usepackage{cancel}
\usepackage{scalerel}

\usepackage[table]{xcolor}

\usepackage[capitalize,noabbrev]{cleveref}

\theoremstyle{plain}

\newtheorem{theorem}{Theorem}[section]

\newtheorem{proposition}[theorem]{Proposition}
\newtheorem{definition}[theorem]{Definition}

\usepackage[textsize=tiny]{todonotes}

\usepackage[utf8]{inputenc} 
\usepackage[T1]{fontenc}    
\usepackage{url}            
\usepackage{booktabs}       
\usepackage{times}
\usepackage{xcolor} 
\usepackage{graphicx}
\usepackage{blindtext}
\usepackage[labelformat=empty, position=top]{subcaption}
\usepackage[export]{adjustbox}
\usepackage{wrapfig}
\usepackage{pifont}
\definecolor{mydarkblue}{rgb}{0,0.08,0.45}
\hypersetup{ %
    colorlinks=true,
    linkcolor=mydarkblue,
    citecolor=mydarkblue,
    filecolor=mydarkblue,
    urlcolor=mydarkblue,
}

\usepackage[shortlabels]{enumitem}

\usepackage{tikz}
\usetikzlibrary{decorations.pathreplacing,calc}

\usepackage[capitalize,noabbrev]{cleveref}

\usepackage[textsize=tiny]{todonotes}

\definecolor{ourpurple}{HTML}{CB297B}
\definecolor{ourlightpurple}{HTML}{FDF7FA}
\definecolor{ourblue}{HTML}{0076BA}
\definecolor{ourlightblue}{HTML}{F5FAFE}
\definecolor{ourmiddle}{HTML}{66509B}
\definecolor{ourlightgreen}{HTML}{DFF5F2}
\definecolor{ourlightmiddle}{HTML}{87DFD6}
\definecolor{ourmiddlegreen}{HTML}{46B7B9}
\definecolor{ourgreen}{HTML}{2F9296}
\hypersetup{colorlinks=true, allcolors=ourblue}

\icmltitlerunning{HDFlow: Hierarchical Diffusion-Flow Planning for Long-horizon Tasks}

\begin{document}

\twocolumn[
  \icmltitle{HDFlow: Hierarchical Diffusion-Flow Planning for Long-horizon Tasks}



  \icmlsetsymbol{equal}{*}

  \begin{icmlauthorlist}
    \icmlauthor{Nandiraju Gireesh }{pku,gal,equal}
    \icmlauthor{Yuanliang Ju}{uot,equal}
    \icmlauthor{Chaoyi Xu}{gal}
    \icmlauthor{Weiheng Liu}{gal}
    \icmlauthor{Yuxuan Wan}{pku,gal}
    \icmlauthor{He Wang}{pku,gal}
  \end{icmlauthorlist}

  \icmlaffiliation{pku}{Peking University}
  \icmlaffiliation{gal}{Galbot}
  \icmlaffiliation{uot}{University of Toronto}

  \icmlcorrespondingauthor{Nandiraju Gireesh}{2401112103@stu.pku.edu.cn}

  \vskip 0.3in
]



\printAffiliationsAndNotice{}  

\begin{abstract}
  Recent advances in generative models have shown promise in generating behavior plans for long-horizon, sparse reward tasks. While these approaches have achieved promising results, they often lack a principled framework for hierarchical decomposition and struggle with the computational demands of real-time execution, due to their iterative denoising process. In this work, we introduce \textbf{Hierarchical Diffusion-Flow} (\texttt{\textbf{HDFlow}}), a novel hierarchical planning framework that optimally leverages the strengths of \textit{diffusion} and \textit{rectified flow} models to overcome the limitations of single-paradigm generative planners. \texttt{\textbf{HDFlow}} employs a high-level diffusion planner to generate sequences of strategic subgoals in a learned latent space, capitalizing on diffusion's powerful exploratory capabilities. These subgoals then guide a low-level rectified flow planner that generates smooth and dense trajectories, exploiting the speed and efficiency of ordinary differential equation (ODE)-based trajectory generation. We evaluate \texttt{\textbf{HDFlow}} on four challenging furniture assembly tasks in both simulation and real-world, where it significantly outperforms state-of-the-art methods. Furthermore, we also showcase our method's generalizability on two long-horizon benchmarks comprising diverse locomotion and manipulation tasks. Project website:~\href{https://hdflow-page.github.io/}{https://hdflow-page.github.io/}
\end{abstract}

\section{Introduction}
Robotic manipulation for complex, long-horizon tasks~\citep{kimble2020benchmarking,suarez2016framework,lee2021ikea,heo2025furniturebench,ankile2024juicer,ankile2024imitation} remains a significant challenge requiring not only understanding multi-stage instructions but also executing precise motions over extended periods. Traditional planning methods struggle with long-horizon problems because small inaccuracies in dynamics prediction or control execution accumulate over time, compounding into significant deviations that ultimately lead to task failure. This has motivated a shift towards hierarchical planning~\citep{sacerdoti1974planning,knoblock,singh,kaelbling2011hierarchical}, which decomposes a complex goal into a sequence of simpler, more manageable subgoals. A powerful approach within this paradigm is to perform planning in the latent space of a learned world model~\citep{ha2018world,pmlr-v97-hafner19a,Hafner2020Dream}. By forecasting future states in a compressed representation, world models allow planners to reason efficiently and abstract away from high-dimensional, noisy observations~\citep{Wang2020Exploring,hafner2022deep}. This type of model-based planning is advantageous over model-free methods due to its superior data efficiency and its ability to readily adapt to various tasks within the same environment~\citep{moerland2022,hamrick2021on}.

The advent of generative models, particularly denoising diffusion models~\citep{sohl2015deep,ho2020denoising,song2021scorebased}, have achieved strong results across various domains~\citep{nichol2022glide,9578791,li2022diffusionlm,gupta2024photorealistic,avdeyev2023dirichlet}. Building upon these successes, they have recently revolutionized planning in robotics~\citep{janner2022planning,ajayconditional,lumakes}. By treating planning as a conditional generation problem, these models can produce diverse and high-quality trajectories. However, their iterative denoising process is computationally intensive~\citep{dong2024diffuserlite}, making them ill-suited for the fast, low-level control required for real-time robotic interaction. Applying diffusion models naively at all levels of a hierarchy~\citep{chensimple,li2023hierarchical,hao2025chd} inherits this critical drawback, creating a bottleneck at the trajectory generation stage. This raises a fundamental question: \textbf{Is a single generative modeling paradigm optimal for all levels of a planning hierarchy?}


We empirically show that the answer is no. The requirements for high-level strategic planning are fundamentally different from those of low-level trajectory generation. High-level planning demands exploration to discover viable sequences of subgoals. In contrast, low-level planning demands speed and deterministic execution to translate a chosen subgoal into a smooth, dense trajectory.

In this paper, we introduce \textbf{H}ierarchical \textbf{D}iffusion-\textbf{F}low (\texttt{\textbf{HDFlow}}), a new hierarchical framework for long-horizon planning that is built on this core insight. \texttt{\textbf{HDFlow}} leverages a hybrid generative architecture that assigns the right tool to the right job. For high-level planning, we employ a \textit{diffusion model} to generate diverse sequences of subgoals within a learned latent space of a world model. While standard conditional diffusion models can generate plans that are consistent with the goal, they lack an explicit mechanism to assess the quality or long-term viability of those plans. In complex, sparse-reward settings, many plausible-looking sequences of subgoals can lead to irreversible failures. To address this, we introduce an \textit{energy-based model} (EBM) to provide explicit guidance. The EBM is trained to assign low energy to successful strategies and high energy to failing ones, effectively learning a dense reward signal. This energy function then steers the diffusion planner away from potential dead ends and towards high-quality solutions, which is critical for robust, long-horizon performance. Furthermore, to mitigate the failures caused by inaccurate guidance, we
enhance the standard EBM guidance with a two-step \textit{manifold-aware} process~\citep{he2024manifold}. For low-level planning, we introduce a \textit{rectified flow model} to rapidly generate dense latent trajectories to reach each subgoal.

We evaluate our proposed method on four contact-rich assembly tasks from the FurnitureBench~\citep{heo2025furniturebench} benchmark in both simulation and real-world settings. Furthermore, we evaluate HDFlow on two diverse and challenging benchmarks: RLBench~\citep{james2020rlbench} and OGBench~\citep{park2025ogbench}, allowing for a more comprehensive assessment of our method's robustness and generalization capabilities. 

In summary, our contributions are as follows:
\begin{itemize}
    \item We propose \texttt{\textbf{HDFlow}}, a novel, hybrid hierarchical planner that combines a diffusion model for high-level exploration and a rectified flow model for low-level trajectory generation.
    \item We introduce a two-stage training process featuring a contrastive-trained world model for structured representation learning and a manifold-aware EBM for explicit guidance of the high-level planner, enabling robust planning in sparse-reward environments.
    \item We demonstrate superior performance on various long-horizon benchmarks in both simulation and real-world settings.
\end{itemize}
\section{Related Works}
Recent advancements in generative models, particularly denoising diffusion probabilistic models~\citep{sohl2015deep,ho2020denoising,song2021scorebased,karras2022elucidating}, have significantly impacted planning in robotics by framing it as a conditional generation problem. Early works such as \texttt{\textbf{Diffuser}}~\citep{janner2022planning} leverage iterative denoising to generate flexible trajectories conditioned on objectives like rewards or constraints. Building on this, Decision Diffuser (\texttt{\textbf{DD}})\citep{ajayconditional} further demonstrated that return-conditional diffusion models can outperform traditional offline reinforcement learning by enabling conditioning on various factors like constraints and skills. While powerful, the iterative nature of diffusion models can be computationally intensive~\citep{dong2024diffuserlite}. To tackle long-horizon tasks, hierarchical planning with diffusion models has emerged, as seen in~\citep{li2023hierarchical,chensimple,hao2025chd}. Simple Hierarchical Diffuser (\texttt{\textbf{SHD}})\citep{chensimple} employs a high-level diffusion model for sparse subgoal generation and a low-level one for dense trajectory refinement, aiming for improved efficiency and generalization. However, their reliance on diffusion models for both high-level and low-level planning increases the maintenance burden and makes them computationally expensive.
\section{Preliminaries}

\subsection{Denoising Diffusion Models}

Denoising diffusion models~\citep{sohl2015deep, ho2020denoising} are generative models that learn a data distribution $p(\mathbf{x})$ by reversing a fixed forward noising process.

\textbf{Forward Process.} The forward process gradually adds Gaussian noise to a data sample $\mathbf{x}_0$ over $L$ discrete timesteps, according to a variance schedule $\beta_\ell$: $q(\mathbf{x}_\ell | \mathbf{x}_{\ell-1}) = \mathcal{N}(\mathbf{x}_\ell; \sqrt{1 - \beta_\ell} \mathbf{x}_{\ell-1}, \beta_\ell\mathbf{I})$. A key property is that we can sample $\mathbf{x}_\ell$ at any timestep $\ell$ in closed form: $q(\mathbf{x}_\ell | \mathbf{x}_0) = \mathcal{N}(\mathbf{x}_\ell; \sqrt{\bar{\alpha}_\ell} \mathbf{x}_0, (1 - \bar{\alpha}_\ell)\mathbf{I})$, where $\alpha_\ell = 1 - \beta_\ell$ and $\bar{\alpha}_\ell = \prod_{i=1}^\ell \alpha_i$. As $\ell \rightarrow L$, $\mathbf{x}_L$ approaches an isotropic Gaussian distribution $\mathcal{N}(0, \mathbf{I})$.

\textbf{Reverse Process.} The reverse process is a learned generative model that starts from noise $\mathbf{x}_L \sim \mathcal{N}(0, \mathbf{I})$ and iteratively denoises it to produce a sample. This process is modeled by a neural network $\epsilon_\theta(\mathbf{x}_\ell, \ell)$ trained to predict the noise $\epsilon$ that was added to the original sample $\mathbf{x}_0$ to produce $\mathbf{x}_\ell$. The training objective is to minimize the mean squared error between the true and predicted noise:
\begin{equation}
\label{eq:2}
    \mathcal{L}_{\text{DDPM}} = \mathbb{E}_{\ell, \mathbf{x}_0, \epsilon} \left[ \left\| \epsilon - \epsilon_\theta(\sqrt{\bar{\alpha}_\ell}\mathbf{x}_0 + \sqrt{1-\bar{\alpha}_\ell}\epsilon, \ell) \right\|^2 \right]
\end{equation}
For conditional generation (e.g., on a context $c$), classifier-free guidance (CFG)~\citep{ho2022classifier} is commonly used. The model is trained on both conditional and unconditional inputs, and the noise prediction during inference is modified to steer generation towards the context: $\hat{\epsilon}_\theta(\mathbf{x}_\ell, \ell, c) = \epsilon_\theta(\mathbf{x}_\ell, \ell, \emptyset) + w \cdot (\epsilon_\theta(\mathbf{x}_\ell, \ell, c) - \epsilon_\theta(\mathbf{x}_\ell, \ell, \emptyset))$

where $w$ is the guidance scale and $\emptyset$ denotes the unconditional case.

\subsection{Rectified Flow}

Rectified Flow~\citep{lipman2022flow,albergo2023building,liuflow} is a generative modeling approach based on ordinary differential equations (ODEs) that offers a more efficient alternative to diffusion models. It learns a deterministic mapping from a simple prior distribution to a data distribution.

Let $p_0$ be a prior distribution (e.g., $\mathcal{N}(0, \mathbf{I})$) and $p_1$ be the data distribution. Rectified Flow constructs straight-line paths between pairs of samples $(\mathbf{x}_0, \mathbf{x}_1)$ drawn from these distributions. The path is defined by the linear interpolation $\mathbf{x}_u = (1-u)\mathbf{x}_0 + u\mathbf{x}_1$ for $u \in [0, 1]$. The corresponding velocity vector field is simply $\mathbf{x}_1 - \mathbf{x}_0$. The model trains a neural network $v_\theta(\mathbf{x}, u)$ to approximate this vector field by minimizing the flow-matching objective:
\begin{equation}
\label{eq:4}
\scalebox{0.9}{$
\mathcal{L}_{RF} = \mathbb{E}_{u, \mathbf{x}_0, \mathbf{x}_1} \left[ \left\| v_\theta((1-u)\mathbf{x}_0 + u\mathbf{x}_1, u) - (\mathbf{x}_1 - \mathbf{x}_0) \right\|^2 \right]
$}
\end{equation}
Once trained, generation is performed by starting with a sample from the prior, $\mathbf{x}_0 \sim p_0$, and solving the initial value problem for the learned ODE from $u=0$ to $u=1$: $\frac{d\mathbf{x}_u}{du} = v_\theta(\mathbf{x}_u, u)$
\section{Method}
In this section, we introduce \texttt{\textbf{HDFlow}}, a hierarchical planning framework that tackles long-horizon manipulation tasks using a two-stage training process (see Fig.~\ref{fig:pipeline}). First, we train a world model with a contrastive objective to learn a semantically structured latent space that embeds a notion of progress. Second, we train a hierarchical planner on top of the frozen, pre-computed latent representations from this world model.

\paragraph{Notation disambiguation.} We use $t$ for environment time indices, $\ell \in \{1, \dots, L\}$ for diffusion timesteps, and $u \in [0,1]$ for the continuous flow time in rectified flow. We also denote a clean (noise-free) latent subgoal sequence by $z^{\text{clean}}$ to avoid confusion with the initial environment state $z_0$.
\begin{figure*}[htp]
    \centering
    \includegraphics[width=\linewidth]{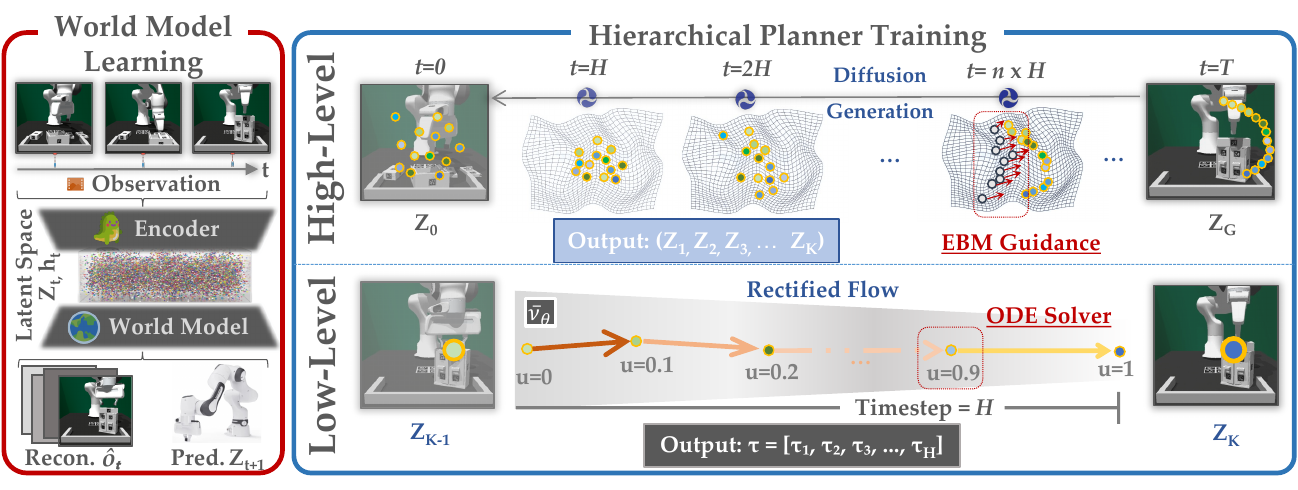}
    \caption{\texttt{\textbf{HDFlow}} \textbf{pipeline.} The framework consists of two main stages: \textbf{World Model Learning} (left), where observations are encoded into a structured latent space, and \textbf{Hierarchical Planner Training} (right). The latter involves a \textit{High-Level} diffusion planner generating sparse strategic subgoals $(z_1, \dots, z_K)$ with Manifold-aware EBM guidance, and a \textit{Low-Level} rectified flow planner synthesizing dense trajectories $\tau=[\tau_1, \dots, \tau_H]$ between subgoals using an ODE solver.}
    \vspace{-12pt}
    \label{fig:pipeline}
\end{figure*}

\subsection{Stage 1: World Model Learning}
\label{sec:stage1}

Our framework operates within the latent space of a world model, which is trained to model the environment's dynamics from multi-modal, high-dimensional observations (denoted as $o$). We adopt a Recurrent State Space Model (RSSM) architecture~\citep{hafner2023mastering} with an encoder that leverages a pretrained DINOv2 model~\citep{oquab2024dinov}. The RSSM learns to encode observations into a latent space $\mathcal{Z}$, and is trained to accurately reconstruct observations and predict future states. For a detailed explanation of the architecture and training objective, please see Appendix~\ref{app:implementation}.
\begin{equation}
\scalebox{0.9}{$
    \begin{split}
        \mathcal{L}_{\text{WM}} = \mathbb{E}_{q_\phi(z_{1:T}|o_{1:T})} \Bigg[ \sum_{t=1}^T \bigl( \log p_\phi(\hat{o}_t|z_t, h_t) \\ - D_{KL}[q_\phi(z_t|h_t, e_t) || p_\phi(\hat{z}_t|h_t)] \bigr) \Bigg]
    \end{split}
$}
\end{equation}
The first term is the \textit{observation reconstruction loss}, which ensures that the world model can accurately reconstruct the observed input $o_t$ from its latent representation $z_t$ and recurrent hidden state $h_t$. The second term is a \textit{KL divergence regularization term}. It minimizes the divergence between the posterior latent distribution $q_\phi(z_t|h_t, e_t)$ (inferred from the actual observation embedding $e_t$) and the prior latent distribution $p_\phi(\hat{z}_t|h_t)$ (predicted solely from the recurrent hidden state $h_t$). This term acts as a bottleneck, forcing the model to learn compressed, predictive latent states.

While $\mathcal{L}_{\text{WM}}$ encourages predictable dynamics, it does not explicitly structure the latent space to reflect a notion of progress towards a goal, making long-horizon planning challenging. To address this, we introduce a contrastive learning objective designed to provide a dense learning signal and organize the latent space for more effective downstream planning. The goal is to pull representations of intermediate latent states from successful trajectories closer to their final goal representation, while pushing them away from intermediate latent states from failed trajectories~\citep{shan2025contradiff}.

Let $f(\cdot)$ be a projection head that maps latent states $z$ to a new embedding space. For a given intermediate latent state $z_k$ from a successful trajectory with final goal $z_G$, we form a positive pair $(f(z_k), f(z_G))$. Negative pairs are formed with intermediate latent states from failed trajectories. The contrastive loss is then given by the modified InfoNCE objective~\citep{oord2018representation} as suggested by~\citep{schroff2015facenet,sohn2016improved}:
\begin{equation}
    \mathcal{L}_{\text{contrastive}} = -\mathbb{E} \left[ \log \frac{\exp(\text{sim}(f(z_k), f(z_G)) / \tau)}{\sum_{j=1}^{N} \exp(\text{sim}(f(z_k), f(z_{j})) / \tau)} \right]
\end{equation}
where $\text{sim}(\cdot, \cdot)$ is the cosine similarity and $\tau$ is a temperature hyperparameter~\citep{wang2021understanding}.

To further encourage the latent space to be informative for control, we also include an inverse dynamics model~\citep{agrawal2016learning,pathak2018zero}. This model is trained to predict the action that was taken to transition between two consecutive latent states. It is trained with a mean squared error loss:
\begin{equation}
    \mathcal{L}_{IDM} = \mathbb{E} \left[ \left\| a_t - \text{MLP}(z_t, z_{t+1}) \right\|^2 \right]
\end{equation}
The full training objective combines the world model loss, the inverse dynamics loss, and the contrastive loss:
\begin{equation}
\mathcal{L}_{\text{WM-total}} = \lambda_{WM} \mathcal{L}_{WM} + \lambda_{IDM} \mathcal{L}_{IDM} + \lambda_{\text{contrastive}} \mathcal{L}_{\text{contrastive}}
\end{equation}
After this stage, the world model's weights are frozen, and its encoder is used to generate a static dataset of structured latent representations for the next stage.

\subsection{Stage 2: Hierarchical Planner Training}
\label{sec:stage2}

With the structured latent space from Stage~\ref{sec:stage1} fixed, we frame the long-horizon planning problem as a conditional generative modeling task. We decompose the problem by defining a temporal abstraction~\citep{chensimple}. For each full-length trajectory in our latent dataset, we define subgoals by subsampling the trajectory at a fixed interval of $H$ timesteps. For a trajectory of length $T$, this yields $K = \lfloor T / H \rfloor$ subgoals, which can vary across trajectories. This process creates two distinct datasets for our planners:
\begin{itemize}
    \item For the \textbf{high-level planner}, we create sparse sequences of $K$ latent subgoals, $z=(z_1, \dots, z_K)$, where each subgoal $z_k$ corresponds to the latent state at timestep $k \cdot H$.
    \item For the \textbf{low-level planner}, we create a dataset of dense, fixed-length trajectory segments. Each segment $\tau_k$ contains the $H$ latent states and actions between consecutive subgoals $z_{k-1}$ and $z_k$.
\end{itemize}

\subsubsection{High-Level Planner: Manifold-Aware EBM-Guided Diffusion}
The high-level planner is a conditional diffusion model, trained to generate a sequence of $K$ latent subgoals, $z = (z_1, \dots, z_K)$, conditioned on the context $c=(z_0, z_G)$, where $z_0$ is the current latent state and $z_G$ is the goal latent state. It is trained by minimizing the standard noise prediction error from Eq.~\ref{eq:2}:
\begin{equation}
    \mathcal{L}_{HL} = \mathbb{E}_{\ell, z^{\text{clean}}, \epsilon} \left[ \left\| \epsilon - \epsilon_\theta\!\left(\sqrt{\bar{\alpha}_\ell}\, z^{\text{clean}} + \sqrt{1-\bar{\alpha}_\ell}\, \epsilon, 
    \ell, c\right) \right\|^2 \right]
\end{equation}
While this provides a strong prior for generating plausible plans, it does not guarantee that all generated plans will be successful, especially in long-horizon scenarios where small errors can compound. To address this, we introduce an Energy-Based Model (EBM)~\citep{du2019implicit,grathwohl2020learning} for explicit guidance at inference time. The EBM, $E_\phi(z | z_0, z_G)$, is a separate network trained to predict a low energy for high-quality latent subgoal sequences and a high energy for poor ones. It is trained with a contrastive loss that pushes down the energy of plans from successful trajectories ($z_{\text{pos}}$) and pushes up the energy of plans from failed trajectories ($z_{\text{neg}}$):
\begin{equation}
    \mathcal{L}_{EBM} = \log(1 + \exp(E_\phi(z_{\text{pos}}) - E_\phi(z_{\text{neg}})))
\end{equation}
However, in high-dimensional latent spaces, inexact guidance can cause \textit{manifold deviation}~\citep{he2024manifold}, a phenomenon where guided samples drift away from the feasible latent subgoal manifold $\mathcal{M}_t$. We formalize this issue through the guidance gap.
Definition~\ref{def:ebm_guidance_gap} and Proposition~\ref{prop:ebm_guidance_gap} show that inaccuracies in energy guidance grow with scenarios involving long planning horizons and high-dimensional latent spaces, leading sampled trajectories to deviate away. Check Appendix~\ref{proof:guidance} for a detailed proof.

\textbf{Manifold-Aware Guidance.} High-dimensional latent representations  often exhibit intrinsic low-dimensional structure.

\begin{mybox}
We extend the result of ~\citep{lee2025local} to the goal-conditioned, EBM-guided planning setting.
\begin{definition}
\label{def:ebm_guidance_gap}
Let $\nabla_{z_t} E_{\text{true}}(z_t|c)$ denote the true optimal energy guidance and $\nabla_{z_t} E_\phi(z_t|c)$ be our learned EBM guidance. The guidance gap at $z_t$ is:
\begin{equation}
    \Delta_{\text{EBM}}(z_t) = \left\|\nabla_{z_t} E_{\text{true}}(z_t|c) - \nabla_{z_t} E_\phi(z_t|c)\right\|_2
\end{equation}
\end{definition}

\begin{proposition}
\label{prop:ebm_guidance_gap}
\text{(Proof in Appendix)} The EBM guidance gap $\Delta_{\text{EBM}}(z_t)$ has a lower bound scaling as $\frac{c}{\sqrt{1-\bar{\alpha}_t}}\sqrt{d}$ in high-dimensional latent spaces, where $c > 0$ is a constant independent of dimensionality $d$.
\end{proposition}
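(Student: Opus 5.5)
We follow the strategy of \citep{lee2025local}: write the true guidance as a conditional expectation over clean plans, and show that the EBM gradient cannot reproduce the part of it that is orthogonal to the data manifold. That orthogonal part is a Gaussian-noise term whose norm grows like $\sqrt{d}/\sqrt{1-\bar{\alpha}_t}$.

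\textbf{Step 1: exact form of the true guidance.} We take the true energy guidance to be the guidance that turns the unconditional score into the score of the reweighted, success-conditioned distribution. Concretely,
$\nabla_{z_t} E_{\text{true}}(z_t|c) = -\nabla_{z_t}\log p_t(z_t|c) + \nabla_{z_t}\log p_t(z_t)$,
where $p_t(\cdot\,|\,c)$ is the noised distribution of the forward process started from $p_0(\cdot\,|\,c)\propto p_0(\cdot)\exp(-E(\cdot|c))$. Tweedie's formula gives
\[
\nabla_{z_t}\log p_t(z_t|c)=\frac{\sqrt{\bar{\alpha}_t}\,\mathbb{E}[z^{\text{clean}}\,|\,z_t,c]-z_t}{1-\bar{\alpha}_t},
\]
and the same expression holds without $c$. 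Subtracting the two, the true guidance equals $\sqrt{\bar{\alpha}_t}/(1-\bar{\alpha}_t)$ times the difference of two posterior means. Separately, the score contains the noise term $-\epsilon/\sqrt{1-\bar{\alpha}_t}$, where $z_t=\sqrt{\bar{\alpha}_t}z^{\text{clean}}+\sqrt{1-\bar{\alpha}_t}\,\epsilon$.

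\textbf{Step 2: manifold decomposition.} We assume the feasible subgoal plans lie on a $k$-dimensional manifold $\mathcal{M}$ with $k\ll d$, as the Manifold-Aware Guidance paragraph suggests, and decompose $\nabla E_\phi$ and $\nabla E_{\text{true}}$ into tangent and normal components at the projection of $z_t$. The learned $E_\phi$ is trained only on clean plans through $\mathcal{L}_{EBM}$, so it depends on $z_t$ only through its on-manifold coordinates. Its gradient then has no systematic normal component, or at least a normal component bounded independently of $d$. The normal component of the true conditional score, by contrast, contains the projection $P_\perp\epsilon/\sqrt{1-\bar{\alpha}_t}$, or alternatively the mismatch between the conditional and unconditional posteriors in the $d-k$ normal directions. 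The triangle inequality then gives
\[
\Delta_{\text{EBM}}\ \ge\ \|P_\perp(\nabla E_{\text{true}}-\nabla E_\phi)\|\ \ge\ \frac{\|P_\perp\epsilon\|}{\sqrt{1-\bar{\alpha}_t}}-C_\phi .
\]

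\textbf{Step 3: concentration.} Since $P_\perp\epsilon$ is a standard Gaussian vector in $d-k$ dimensions, Gaussian concentration gives $\|P_\perp\epsilon\|\ge \tfrac12\sqrt{d-k}\ge c'\sqrt{d}$ with probability at least $1-e^{-\Omega(d)}$. The same conclusion holds in expectation through $\mathbb{E}\|P_\perp\epsilon\|\approx\sqrt{d-k}$. For large $d$ the term $C_\phi$ is absorbed, which yields the bound $c\sqrt{d}/\sqrt{1-\bar{\alpha}_t}$ with $c$ independent of $d$.

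\textbf{Main obstacle.} The delicate point is Step 2: justifying that the true guidance retains a $\Theta(\sqrt{d})$ normal component while $E_\phi$ does not. The noise terms cancel in the difference of the conditional and unconditional scores, so the normal residual must come from the two posterior means differing off-manifold. This is the point where we will follow \citep{lee2025local}. Their argument compares the exact Tweedie posterior with the approximation that inexact guidance implicitly uses, namely evaluating $E$ at $\hat z_0(z_t)$ instead of taking the expectation of $e^{-E}$. The resulting error is a Jensen-gap or covariance term, $\mathrm{Cov}[z^{\text{clean}}|z_t]\nabla E$, whose normal part scales with the posterior spread.

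If this turns out too weak, we fall back on a stated modeling assumption: the learned EBM gradient has no normal component, and the target conditional score has a normal component of order $\sqrt{d-k}/\sqrt{1-\bar{\alpha}_t}$. Under that assumption the proposition reduces to the concentration argument in Step 3.
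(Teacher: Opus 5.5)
The gap is the one you flagged in your last paragraph, and it breaks Step~2 as written. Under your Step~1 definition, $\nabla_{z_t}E_{\text{true}}=-\frac{\sqrt{\bar{\alpha}_t}}{1-\bar{\alpha}_t}\bigl(\mathbb{E}[z^{\text{clean}}\mid z_t,c]-\mathbb{E}[z^{\text{clean}}\mid z_t]\bigr)$, which contains no noise term at all. Your claim that its normal part contains $P_\perp\epsilon/\sqrt{1-\bar{\alpha}_t}$ therefore assigns to the guidance a term that belongs only to the full conditional score.

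In the cleanest instance of your own setting the claim is exactly false. Suppose $\mathcal{M}$ is affine and both the unconditional and the reweighted clean distributions are supported on it. Then both posterior means are averages of points of $\mathcal{M}$, so they lie in $\mathcal{M}$, and $P_\perp\nabla E_{\text{true}}=0$. Add your hypothesis that $\nabla E_\phi$ is tangential, and the normal part of the gap vanishes. Your displayed inequality $\|P_\perp(\nabla E_{\text{true}}-\nabla E_\phi)\|\ge\|P_\perp\epsilon\|/\sqrt{1-\bar{\alpha}_t}-C_\phi$ then fails by order $\sqrt{d-k}/\sqrt{1-\bar{\alpha}_t}$.

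The Jensen-gap route does not rescue this. In the same setting $\mathrm{Cov}[z^{\text{clean}}\mid z_t]$ has range in the tangent space, so the covariance correction has no normal component. Its size is governed by $\|\nabla E\|$ and the posterior spread, not by the norm of a $(d-k)$-dimensional Gaussian. Your fallback assumption compares $\nabla E_\phi$ with the full conditional score, which contradicts the definition you fixed in Step~1.

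The idea you are missing is how the paper avoids the cancellation: it never subtracts the unconditional score. It takes the true guidance to be the normalized $e^{-E}$-weighted posterior average of $\nabla_{z_t}\log q=-\epsilon/\sqrt{1-\bar{\alpha}_t}$, which is the full reweighted score. It models the learned guidance as the linearly $E$-weighted average of the same vector, $-\frac{1}{\sqrt{1-\bar{\alpha}_t}}\mathbb{E}_{q(z_0|z_t)}[E(z_0|c)\,\epsilon]$. The gap becomes $\frac{1}{\sqrt{1-\bar{\alpha}_t}}\|\mathbb{E}[\delta(z_0)\,\epsilon]\|$, where $\delta$ is the difference of the two weightings. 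Since $\mathbb{E}[\delta]=1-\mathbb{E}[E]\neq 0$, the noise is reweighted rather than cancelled, and the paper then uses $\|\epsilon\|\approx\sqrt{d}$ to get the scaling.

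If you adopt that definition of $E_{\text{true}}$, your fallback becomes consistent, and it is a tidier version of the same mechanism. For affine $\mathcal{M}$, $P_\perp\epsilon$ is determined by $z_t$, so the normal part of the target is exactly $-P_\perp\epsilon/\sqrt{1-\bar{\alpha}_t}$. A tangential $\nabla E_\phi$ leaves this unmatched, and your Step~3 concentration bound replaces the paper's informal CLT argument. Be aware that under this definition the $\sqrt{d}$ term comes from the unconditional part of the score rather than from any inaccuracy of $E_\phi$. That is exactly why it disappears under your Step~1 definition.
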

\end{mybox}


 Under our contrastive training, successful latent subgoal sequences concentrate on a $k$-dimensional submanifold $\mathcal{M}_0 \subset \mathbb{R}^d$ with $k \ll d$. To mitigate this manifold deviation, we enhance the standard EBM guidance with a two-step manifold-aware process as suggested in~\citep{lee2025local}. Rather than applying guidance directly to the noise prediction, we perform:

\textbf{Step 1: Guided Sampling.}
\begin{equation}
    z_{\ell-1}^{\text{temp}} \sim \mathcal{N}\big(\mu_\theta(z_\ell) + w_{ebm}\, \Sigma^\ell \, g, 
    \Sigma^\ell\big)
\end{equation}
where $g = \nabla_{z_\ell} E_\phi(z_\ell|c)$ is the EBM guidance and $\mu_\theta, \Sigma^\ell$ are the mean and covariance of the reverse diffusion transition.

\textbf{Step 2: Manifold Projection.}
\begin{equation}
    z_{\ell-1} = \mathcal{P}_{\mathcal{T}_{z_{\ell-1}}\mathcal{M}_{\ell-1}}\big(z_{\ell-1}^{\text{temp}}\big)
\end{equation}
where $\mathcal{P}_{\mathcal{T}_{z_{\ell-1}}\mathcal{M}_{\ell-1}}$ projects onto the local tangent space of the latent manifold $\mathcal{M}_{\ell-1}$.

The manifold projection is computed using local low-rank approximation: we first obtain a denoised estimate using Tweedie's formula~\citep{robbins1992empirical,chung2022improving,chung2023diffusion}:
\begin{equation}
    \hat{z}^{0|\ell-1} = \frac{1}{\sqrt{\bar{\alpha}_{\ell-1}}}\!\left(z_{\ell-1}^{\text{temp}} - \sqrt{1-\bar{\alpha}_{\ell-1}}\,\epsilon_\theta(z_{\ell-1}^{\text{temp}}, 
    \ell\!-
    1, c)\right)
\end{equation}

We then retrieve $k$ nearest neighbors from successful latent subgoal sequences using cosine similarity~\citep{feng2024resisting}, forward diffuse them to timestep $\ell-1$, and perform rank-$r$ PCA to obtain the projection basis $\boldsymbol{U} \in \mathbb{R}^{d \times r}$. Let $\boldsymbol{\mu}$ be the local mean of these neighbors. The mean-centered projection is
\[ \mathcal{P}(\mathbf{z}) = \boldsymbol{\mu} + \boldsymbol{U}\boldsymbol{U}^T(\mathbf{z} - \boldsymbol{\mu}). \]

\begin{figure}[t]
    \centering
    \includegraphics[width=\linewidth]{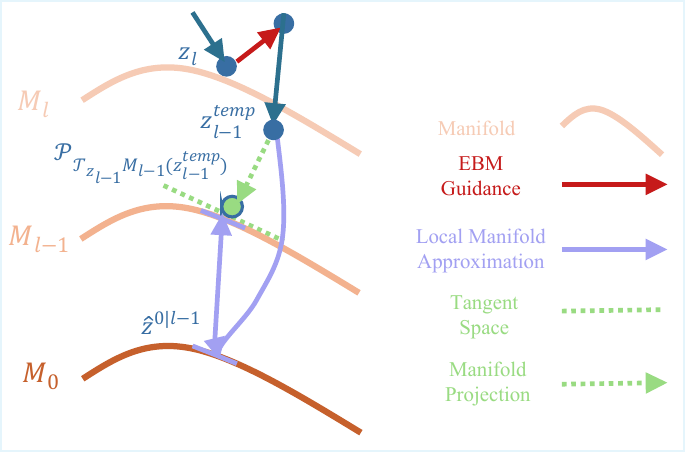}
    \caption{\textbf{Manifold-Aware EBM-Guided Diffusion step}: Starting from a noisy latent sample ($\boldsymbol{z}_\ell$), the standard reverse diffusion predicts a mean. EBM guidance (red arrow) then shifts this mean, leading to a guided sample ($\boldsymbol{z}_{\ell-1}^{\text{temp}}$). To prevent manifold deviation, $\boldsymbol{z}_{\ell-1}^{\text{temp}}$ is subsequently projected onto the local latent manifold $\mathcal{M}_{\ell-1}$ (using local manifold approximation and projection, indicated by purple and green arrows), yielding the final projected sample ($\boldsymbol{z}_{\ell-1}$). This two-step process ensures both successful and feasible subgoal generation within the hierarchical planning framework.}
    \vspace{-10pt}
    \label{fig:manifold}
\end{figure}
\begin{mybox}
    \begin{proposition}
    Given a base diffusion planner $\epsilon_\theta$ trained for classifier-free guidance, a learned energy function $E_\phi(z|c)$, and a manifold projection operator $\mathcal{P}_{\mathcal{M}}$, the manifold-aware guided planner, which combines EBM guidance with manifold projection, corresponds to sampling from a posterior distribution $p(z|y=1, z \in \mathcal{M}, c)$ that maximizes the likelihood of generating a successful and feasible goal-conditioned plan.
\end{proposition}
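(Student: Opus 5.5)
The plan is to treat the statement as a Bayesian decomposition of the target posterior. Each of the three ingredients (the CFG-trained denoiser, the EBM, and the projection $\mathcal{P}_{\mathcal{M}}$) will be identified with one factor of that posterior, and the two-step update will then be shown to implement, to first order in the step size, one reverse-time step toward it. The interpretation to fix first is the success likelihood. We read the trained EBM as a Boltzmann likelihood $p(y=1\mid z,c)\propto \exp(-E_\phi(z\mid c))$. This is justified because $\mathcal{L}_{EBM}$ is exactly the logistic (Bradley--Terry) loss on energy differences, so its minimizer makes $-E_\phi$ a log-odds of success up to a $c$-dependent constant. Feasibility is encoded as the indicator $\mathbf{1}[z\in\mathcal{M}]$, which gives
\begin{equation}
p(z\mid y=1, z\in\mathcal{M}, c)\;\propto\; p_\theta(z\mid c)\,\exp\!\big(-E_\phi(z\mid c)\big)\,\mathbf{1}[z\in\mathcal{M}],
\end{equation}
assuming conditional independence of $y$ and the manifold event given $(z,c)$.

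The next step is to pass to noise level $\ell$. Tweedie's identity links $\epsilon_\theta(z_\ell,\ell,c)$ to the conditional score via $\nabla_{z_\ell}\log p_\ell(z_\ell\mid c)=-\epsilon_\theta/\sqrt{1-\bar\alpha_\ell}$, and the CFG combination gives the score of the sharpened prior $p(z\mid c)^{w}p(z)^{1-w}$. For the guidance term we approximate the noisy likelihood $p(y=1\mid z_\ell,c)$ by its value at the Tweedie estimate, or directly by $\exp(-E_\phi(z_\ell\mid c))$. In the standard classifier-guidance argument (Dhariwal--Nichol / Sohl-Dickstein), the reverse Gaussian transition multiplied by a log-linearized likelihood is again Gaussian with mean shifted by $\Sigma^\ell\nabla\log p(y\mid z_\ell)$. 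This is precisely Step 1 with $g$ (up to sign convention) and scale $w_{ebm}$, where $w_{ebm}$ acts as an inverse temperature on the likelihood. So Step 1 samples, to $O(\|\Sigma^\ell\|^{2})$, from the transition of the posterior without the manifold constraint.

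The third step handles the constraint. A product with an indicator corresponds to conditioning. When $\mathcal{M}_{\ell-1}$ is locally approximated by the affine subspace $\boldsymbol{\mu}+\mathrm{span}(\boldsymbol U)$, restricting a Gaussian to an affine subspace is again Gaussian. In the isotropic case $\Sigma^\ell=\sigma_\ell^2 I$, its mean is the orthogonal projection of the unconstrained mean, and a sample is obtained by projecting the unconstrained sample. Hence Step 2 realizes the conditional transition onto the local tangent space. Here we use the manifold hypothesis stated before the proposition (successful plans concentrate on a $k$-dimensional $\mathcal{M}_0$) together with the fact that forward-diffused neighbors plus rank-$r$ PCA consistently estimate $\mathcal{T}\mathcal{M}_{\ell-1}$. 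Chaining the transitions over $\ell=L,\dots,1$ and letting step sizes vanish yields the reverse SDE for the posterior. Maximizing the likelihood of a successful and feasible plan then follows because samples concentrate, as $w_{ebm}$ grows, on maximizers of $\log p_\theta-w_{ebm}E_\phi$ within $\mathcal{M}$.

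The main obstacle is the second and third steps: the noisy likelihood $p(y\mid z_\ell)$ is not $\exp(-E_\phi(z_\ell))$, and the projection is only an approximation to conditioning. These steps are also exactly where Proposition~\ref{prop:ebm_guidance_gap} says errors grow like $\sqrt{d}/\sqrt{1-\bar\alpha_\ell}$. I would first state the result as holding up to first-order discretization error and a tangent-space approximation error. I would then argue that the projection actually helps: it removes the normal component of the gap, reducing the effective dimension from $d$ to $r$. The correspondence should therefore be claimed approximately, under the local-linearity and isotropy assumptions, rather than exactly.
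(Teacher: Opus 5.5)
Your skeleton is the paper's. You factor the target as $p(y=1\mid z,c)\,p(z\mid c)\,\mathbf{1}[z\in\mathcal{M}]$, let Step~1 (CFG plus EBM guidance) realize the unconstrained factor, and let Step~2 enforce the indicator. Your Dhariwal--Nichol argument for Step~1 is essentially the paper's additive score decomposition. You depart from the paper in how the two steps are glued together. The paper only appeals to alternating projections and contraction to say the iteration settles at a compromise between success and feasibility. You claim something much sharper: that per-step Gaussian conditioning onto the affine approximation of $\mathcal{M}_{\ell-1}$, chained over $\ell=L,\dots,1$, yields the reverse SDE of $p(z\mid y=1,z\in\mathcal{M},c)$. \textbf{That chaining step fails}, even with none of the errors you budget for: no discretization error, exact scores, an exactly affine manifold, and isotropic $\Sigma^\ell$.

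The reason is that the indicator constrains only the clean endpoint. The noisy marginals of the constrained posterior are full-dimensional for $\ell\ge 1$, so forcing $z_{\ell-1}\in\mathcal{M}_{\ell-1}$ at every level is an extra restriction. Greedy step-by-step conditioning is not endpoint conditioning. Endpoint conditioning would need a Doob $h$-transform with $h_\ell(z_\ell)=p(z_0\in\mathcal{M}\mid z_\ell)$, taken in the thickened or density sense, which is exactly the role the EBM plays for the event $y=1$.

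A two-dimensional example makes this concrete. Take $z=(x,y)$ with $p(z\mid c)=\mathcal{N}(0,\Sigma)$, unit variances, correlation $\rho\neq 0$, $E_\phi\equiv 0$, and $\mathcal{M}=\{y=0\}$.
\begin{itemize}
\item The target is $x\sim\mathcal{N}(0,1-\rho^2)$. Its forward marginals have variance $1-\bar\alpha_\ell\rho^2$, hence score $-x/(1-\bar\alpha_\ell\rho^2)$.
\item Because $\Sigma^\ell$ is isotropic, your chain's $x$-coordinate is untouched by the conditioning, while $y$ is reset to $0$ at every step. Its $x$-drift therefore uses $\nabla_x\log p_\ell(x,0)=-x/(1-\bar\alpha_\ell^{2}\rho^2)$, which is the score of a different family of marginals.
\item In the continuous-time limit, the variance ODE of this reverse process, measured relative to $1-\bar\alpha\rho^2$, has a strictly positive source term $2\beta\,\bar\alpha\rho^2(1-\bar\alpha)/(1-\bar\alpha^2\rho^2)$.
\end{itemize}
So the endpoint variance is strictly larger than $1-\rho^2$, and the chain does not sample the constrained posterior.

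The dominant error is therefore not the first-order discretization or tangent-space error your budget assumes; it is structural. To close the gap you need one of the following:
\begin{itemize}
\item an additional hypothesis, for example that $p(z\mid c)$ and $E_\phi$ factor into tangential and normal parts near $\mathcal{M}$, so the normal coordinates never enter the tangential drift and resetting them each step is harmless;
\item a genuine guidance term for the manifold event itself;
\item or a weaker conclusion, stated for the law the scheme actually produces. This is roughly the paper's hedged ``balance of optimality and feasibility'', although its alternating-projection argument concerns convergence to a point rather than sampling.
\end{itemize}

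Two smaller points. First, the Bradley--Terry argument identifies $-E_\phi$ with the log-\emph{odds} $\log\frac{p(y=1\mid z,c)}{p(y=0\mid z,c)}$ up to a constant. So $e^{-E_\phi}$ is proportional to the odds, not to $p(y=1\mid z,c)$, and the Boltzmann reading is a modeling choice (the paper simply posits it), not a consequence of $\mathcal{L}_{EBM}$. Second, you correctly note that CFG yields the score of $p(z\mid c)^{w}p(z)^{1-w}$ and that $w_{ebm}$ acts as an inverse temperature. With $w_{cfg}=2$, Step~1 therefore targets a tempered posterior, which you never reconcile with the stated target $p(z\mid y=1,z\in\mathcal{M},c)$.
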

\end{mybox}
\begin{table*}[t]
\centering
\footnotesize
\begin{tabularx}{\textwidth}{>{\raggedright\arraybackslash}p{1.8cm}|>{\centering\arraybackslash}X>{\centering\arraybackslash}X>{\centering\arraybackslash}X|>{\centering\arraybackslash}X>{\centering\arraybackslash}X>{\centering\arraybackslash}X|>{\centering\arraybackslash}X>{\centering\arraybackslash}X>{\centering\arraybackslash}X|>{\centering\arraybackslash}X>{\centering\arraybackslash}X}
\toprule
 & \multicolumn{3}{c|}{\texttt{\textbf{one\_leg}}} 
 & \multicolumn{3}{c|}{\texttt{\textbf{lamp}}} 
 & \multicolumn{3}{c|}{\texttt{\textbf{round\_table}}} 
 & \multicolumn{2}{c}{\texttt{\textbf{cabinet}}} \\
\cmidrule(lr){2-4} \cmidrule(lr){5-7} \cmidrule(lr){8-10} \cmidrule(lr){11-12}
\texttt{\textbf{Method}} & \texttt{\textbf{Low}} & \texttt{\textbf{Med}} & \texttt{\textbf{High}} & \texttt{\textbf{Low}} & \texttt{\textbf{Med}} & \texttt{\textbf{High}} & \texttt{\textbf{Low}} & \texttt{\textbf{Med}} & \texttt{\textbf{High}} & \texttt{\textbf{Low}} & \texttt{\textbf{Med}} \\
\midrule
\texttt{\textbf{BC}} & $0$ & $0$ & $0$ & $0$ & $0$ & $0$ & $0$ & $0$ & $0$ & $0$ & $0$ \\
\texttt{\textbf{DP}} & $51$ & $19$ & $3$ & $18$ & $7$ & $1$ & $6$ & $2$ & $0$ & $4$ & $1$ \\
\texttt{\textbf{JUICER}} & $68$ & $22$ & $3$ & $27$ & $12$ & $2$ & $23$ & $8$ & $2$ & $11$ & $5$ \\
\midrule
\texttt{\textbf{Diffuser}} & $56$ & $22$ & $4$ & $22$ & $9$ & $1$ & $21$ & $7$ & $1$ & $6$ & $2$ \\
\texttt{\textbf{DD}} & $60$ & $22$ & $4$ & $24$ & $11$ & $1$ & $22$ & $8$ & $2$ & $9$ & $3$ \\
\texttt{\textbf{HDMI}} & $66$ & $26$ & $11$ & $37$ & $16$ & $11$ & $33$ & $15$ & $9$ & $17$ & $8$ \\
\texttt{\textbf{SHD}} & $71$ & $31$ & $15$ & $43$ & $22$ & $16$ & $41$ & $21$ & $12$ & $21$ & $11$ \\
\midrule
\rowcolor[HTML]{a6e3e9}
\texttt{\textbf{Ours}} & $\mathbf{92}$ & $\mathbf{71}$ & $\mathbf{39}$ & $\mathbf{68}$ & $\mathbf{49}$ & $\mathbf{34}$ & $\mathbf{61}$ & $\mathbf{43}$ & $\mathbf{27}$ & $\mathbf{55}$ & $\mathbf{36}$\\
\bottomrule
\end{tabularx}
\vspace{5pt}
\caption{\small \textbf{Performance on FurnitureBench in simulation.} Success rates (\%) are reported for different initial randomization levels. \textbf{Top:} Imitation Learning methods, where JUICER~\cite{ankile2024juicer} generally outperforms pure BC and Diffusion Policy. \textbf{Bottom:} Diffusion-based Planners, where our proposed \texttt{\textbf{HDFlow}} provides significant improvements in success rates over other planning methods.}
\label{tab:main_results}
\vspace{-10pt}
\end{table*}
\subsubsection{Low-Level Planner: Rectified Flow for Trajectory Generation}

The low-level planner's role is to generate a dense, short-horizon latent trajectory $\tau_z$ to a given subgoal $z_k$. We can frame this subproblem through the lens of optimal transport. Here, the task is to find the optimal mapping from the distribution of initial states around $z_{k-1}$ to the distribution of target states around $z_k$. The \textit{cost} of transport is minimized by trajectories that are as straight as possible in the latent space. We use a conditional rectified flow model, $v_\theta(\tau_u, u, c_k)$, for its speed. It is trained to generate a trajectory segment $\tau$ conditioned on the context $c_k=(z_{k-1}, z_k)$ by minimizing the standard flow-matching objective from Eq.~\ref{eq:4}:
\begin{equation}
\scalebox{0.9}{$
\mathcal{L}_{LL} = \mathbb{E}_{u, \tau_0, \tau_1} \left[ \left\| v_\theta\big((1-u)\tau_0 + u\tau_1, u, c_k\big) - (\tau_1 - \tau_0) \right\|^2 \right]
$}
\end{equation}

\textbf{Construction of training pairs.} For each consecutive latent subgoal pair $(z_{k-1}, z_k)$, we extract the dense $H$-step latent segment from successful demonstrations. We set $\tau_1$ to be the observed segment that ends at $z_k$ and $\tau_0$ to be the segment that starts at $z_{k-1}$ from the same demonstration.

\subsubsection{Training and Inference}
The components of the hierarchical planner are trained jointly with a composite loss function that combines the objectives for the high-level planner, the low-level planner, the EBM, and manifold consistency:
\begin{equation}
\scalebox{0.85}{$
\mathcal{L}_{\text{planner}} = \lambda_{HL} \mathcal{L}_{HL} + \lambda_{LL} \mathcal{L}_{LL} + \lambda_{\text{EBM}} \mathcal{L}_{\text{EBM}} + \lambda_{\text{proj}} \mathcal{L}_{\text{projection}}
$}
\end{equation}
where the $\lambda$ terms are hyperparameters and $\mathcal{L}_{\text{projection}}$ encourages the generated latent subgoals to remain close to the learned latent manifold:
\begin{equation}
\mathcal{L}_{\text{projection}} = \mathbb{E}_{z \sim \pi_{HL}} \left[ \left\| z - \mathcal{P}_{\mathcal{M}}(z) \right\|^2 \right]
\end{equation}

During online deployment in an MPC framework, the high-level planner is invoked iteratively. At each replanning step, it takes the robot's \textit{current latent state}  and the \textit{goal state} as input to generate a new sequence of subgoals for the remaining portion of the task. The low-level planner takes the first subgoal from this sequence, and generates a dense latent trajectory. Then consecutive latent pairs $(z_t, z_{t+1})$ along the generated latent trajectory are mapped to control actions via the inverse dynamics model introduced in Stage~\ref{sec:stage1}.
\section{Experiments}
To evaluate the efficacy of our proposed framework, we design a set of experiments to answer the following key questions: (\textbf{1}) Does \texttt{\textbf{HDFlow}} achieve state-of-the-art \textbf{performance} on complex, long-horizon robotic tasks compared to existing methods? (\textbf{2}) How does our \textbf{hybrid} architecture compare against non-hybrid hierarchical planner as well as single planner approaches? (\textbf{3}) What are the contributions of the \textbf{core components} of \texttt{\textbf{HDFlow}}? (\textbf{4}) Does \texttt{\textbf{HDFlow}} offer superior \textbf{computational efficiency} during inference, a critical factor for real-time robotic control?

\subsection{FurnitureBench Experiments}
\subsubsection{Simulation Experiments}
\vspace{-5pt}
\textbf{Tasks and Environment.} We evaluate our method on FurnitureBench~\citep{heo2025furniturebench}, a challenging benchmark for long-horizon, contact-rich robotic assembly. We chose 4 tasks from the benchmark: \texttt{\textbf{one\_leg}}, \texttt{\textbf{lamp}}, \texttt{\textbf{round\_table}}, and \texttt{\textbf{cabinet}} (see Figure~\ref{fig:tasks} for start and goal positions), with the default initial randomization protocol: \texttt{\textbf{Low}}, \texttt{\textbf{Med}}, and \texttt{\textbf{High}}. We report the success rate calculated over $100$ episodes for each task. Further details about the tasks and dataset collection are provided in Appendix~\ref{app:furbench_experiments} and detailed descriptions of our experimental setup are provided in Appendix~\ref{app:implementation}

\begin{figure*}
    \centering
    \includegraphics[width=\textwidth]{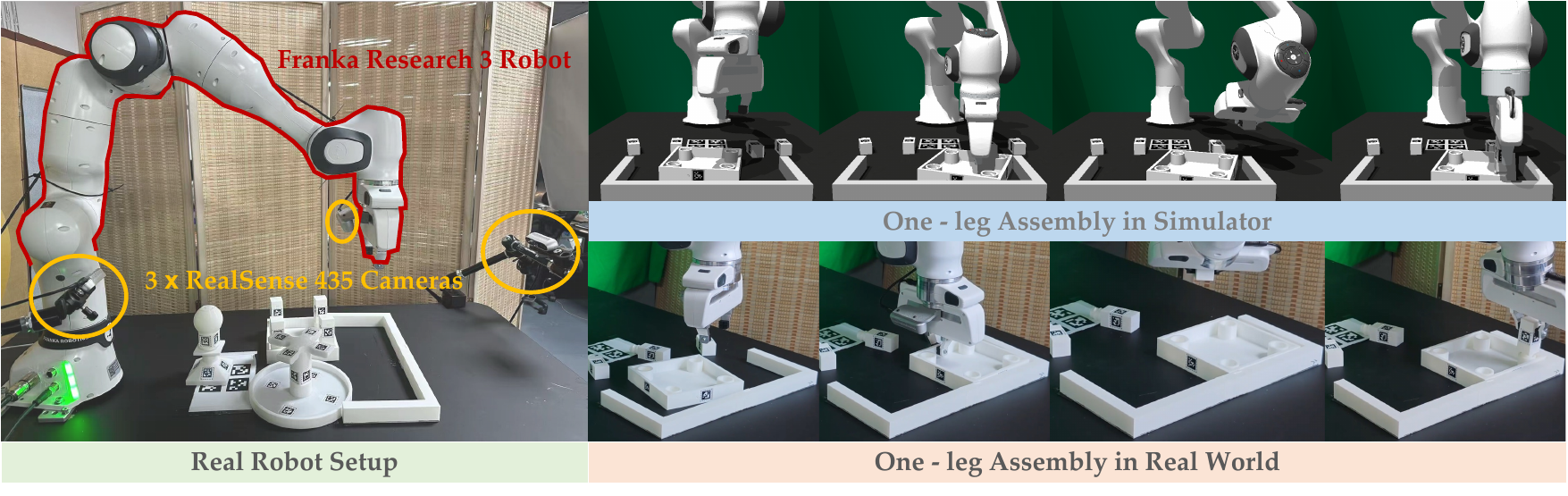}
    \caption{\small (left) Real-world FurnitureBench setup. (right) A successful rollout of \texttt{\textbf{HDFlow}} planner on the \texttt{\textbf{one\_leg}} assembly task initialized with \texttt{\textbf{Med}} randomness in both simulation (top) and real-world (bottom).}
    \label{fig:setup}
    \vspace{-10pt}
\end{figure*}

\textbf{Baselines.} We compare against two categories of methods:

\textbf{(i) Imitation Learning (IL) Baselines:} These methods learn directly from successful demonstrations without explicit reward signals.
\begin{itemize}[leftmargin=1.5em, nosep]
    \item Vanilla Behaviour Cloning (\texttt{\textbf{BC}})
    \item Diffusion Policy (\texttt{\textbf{DP}})~\citep{chi2023diffusion}
    \item \texttt{\textbf{JUICER}}~\citep{ankile2024juicer}
\end{itemize}

\textbf{(ii) Diffusion-based Planners:} These methods use diffusion-based planning from a mixed dataset of successful and failure demonstrations.
\begin{itemize}[leftmargin=1.5em, nosep]
    \item \texttt{\textbf{Diffuser}}~\citep{janner2022planning}
    \item Decision Diffuser (\texttt{\textbf{DD}})~\citep{ajayconditional}
    \item \texttt{\textbf{HDMI}}~\citep{li2023hierarchical}
    \item Simple Hierarchical Diffuser (\texttt{\textbf{SHD}})~\citep{chensimple}
\end{itemize}
We provide more details about each baseline in Appendix~\ref{app:baselines}.

\textbf{Analysis.}
The results presented in Table~\ref{tab:main_results} demonstrate \texttt{\textbf{HDFlow}}'s superior performance across all four challenging furniture assembly tasks and varying levels of initial randomization. Our framework consistently achieves the highest success rates, significantly outperforming imitation learning, non-hierarchical, and other hierarchical diffusion planners. Specifically, \texttt{\textbf{HDFlow}} achieves a success rate of $55\%$ and $36\%$ on \texttt{cabinet} task beating the baselines by a wide margin.

\subsubsection{Ablation Studies}
\vspace{-5pt}
To systematically analyze the contribution of each component within \texttt{\textbf{HDFlow}}, we conduct extensive ablation studies. For more ablation experiments, please see Appendix~\ref{app:ablation}

\begin{table}[h]
    \centering
    \footnotesize
    \begin{tabularx}{0.5\textwidth}{>{\raggedright\arraybackslash}X|>{\centering\arraybackslash}X>{\centering\arraybackslash}X}
    \toprule
    \texttt{\textbf{Method}} & \texttt{\textbf{SR(\%)}} & \texttt{\textbf{Time (ms)}} \\
    \midrule
    \texttt{\textbf{FD}} & $24$ & $197$ \\
    \texttt{\textbf{HF}} & $24$ & $53$ \\
    \texttt{\textbf{HD}} & $43$ & $142$ \\
    \midrule
    \rowcolor[HTML]{a6e3e9}
    \texttt{\textbf{Ours}} & $\mathbf{68}$ & $\mathbf{88}$ \\
    \bottomrule
    \end{tabularx}
    \vspace{3pt}
    \caption{\small Ablation study on the choice of generative models for the high-level and low-level planners, and their computational efficiencies on \texttt{\textbf{lamp}} task. Success rates (\%) and average inference time (ms/step) are reported.} 
    \label{tab:ablation_planner_choice}
    \vspace{-10pt}
\end{table}
\textbf{Choice of Generative Model.}
We compare \texttt{\textbf{HDFlow}} against variants that use alternative generative models. Specifically, we consider \texttt{\textbf{FD}} (flat diffusion), \texttt{\textbf{HF}} (hierarchical flow), and \texttt{\textbf{HD}} (hierarchical diffusion). Table~\ref{tab:ablation_planner_choice} shows that our planner significantly outperforms single-paradigm or flat approaches. Furthermore, we also assess the computational efficiency of these variants by measuring the average inference time per planning step and present them in the same table.

\textbf{Contribution of Core Components.}
We investigate the impact of the contrastive world model, EBM guidance, and manifold projection by progressively removing them from \texttt{\textbf{HDFlow}}. Table~\ref{tab:ablation_core_components} demonstrates that each novel component measurably contributes to \texttt{\textbf{HDFlow}}'s overall success, highlighting the importance of our multi-faceted approach to long-horizon tasks.

\begin{table}[h]
    \centering
    \footnotesize
    \renewcommand{\arraystretch}{1.3}
    \begin{tabularx}{0.5\textwidth}{>{\raggedright\arraybackslash}p{4cm}|>{\centering\arraybackslash}X>{\centering\arraybackslash}X}
    \toprule
    \texttt{\textbf{Method}} & \texttt{\textbf{one\_leg}} & \texttt{\textbf{lamp}} \\
    \midrule
    \texttt{\textbf{w/o Manifold Projection}} & $84$ & $57$ \\
    \texttt{\textbf{w/o Manifold-aware EBM}} & $61$ & $33$ \\
    \texttt{\textbf{w/o Contrastive WM}} & $58$ & $27$ \\
    \midrule
    \rowcolor[HTML]{a6e3e9}
    \texttt{\textbf{Ours}} & $\mathbf{92}$ & $\mathbf{68}$ \\
    \bottomrule
    \end{tabularx}
    \vspace{3pt}
    \caption{\small Ablation study on the core components of \texttt{\textbf{HDFlow}}. Success rates (\%) are reported for the \texttt{\textbf{one\_leg}} and \texttt{\textbf{lamp}} tasks under \texttt{\textbf{Low}} randomization.}
    \label{tab:ablation_core_components}
    \vspace{-10pt}
\end{table}

\begin{table}[h]
        \centering
        \resizebox{0.5\textwidth}{!}{%
        \begin{tabular}{lcc|cc|cc}
        \toprule
         & \multicolumn{2}{c}{\texttt{\textbf{one\_leg}}} 
         & \multicolumn{2}{c}{\texttt{\textbf{lamp}}} 
         & \multicolumn{2}{c}{\texttt{\textbf{round\_table}}}  \\
        \cmidrule(lr){2-3} \cmidrule(lr){4-5} \cmidrule(lr){6-7} 
        \texttt{\textbf{Method}} & \texttt{\textbf{Low}} & \texttt{\textbf{Med}} & \texttt{\textbf{Low}} & \texttt{\textbf{Med}} & \texttt{\textbf{Low}} & \texttt{\textbf{Med}} \\
        \midrule
        \texttt{\textbf{BC}} & $0/10$ & $0/10$ & $0/10$ & $0/10$ & $0/10$ & $0/10$ \\
        \texttt{\textbf{IQL}} & $3/10$ & $1/10$ & $1/10$ & $0/10$ & $0/10$ & $0/10$ \\
        \midrule
        \rowcolor[HTML]{a6e3e9}
        \texttt{\textbf{Ours}} & $\mathbf{8/10}$ & $\mathbf{6/10}$ & $\mathbf{5/10}$ & $\mathbf{4/10}$ & $\mathbf{4/10}$ & $\mathbf{3/10}$ \\
        \bottomrule
        \end{tabular}
        }
        \vspace{3pt}
        \caption{\small \textbf{Performance on FurnitureBench in real-world.} Success rates (number of successes out of 10 episodes) are reported for different initial randomization levels.}
        \label{tab:real_results}
    \vspace{-15pt}
\end{table}

\subsubsection{Real-world Experiments}
We conduct experiments in the real-world using a Franka Research 3 robot arm by setting up the benchmark as shown in Figure~\ref{fig:setup}. We finetune our world model and hierarchical planners using $50$ real-world demonstrations for each task, and report success rate on $10$ evaluation episodes in Table~\ref{tab:real_results}. We compare our method with (a) Vanilla Behavior Cloning (\texttt{\textbf{BC}}), and (b) Implicit Q-Learning (\texttt{\textbf{IQL}})~\citep{kostrikov2022offline}. \texttt{\textbf{HDFlow}} significantly outperforms both baselines, achieving high success rates on all tasks and maintaining robust performance even under increased initial randomization, which validates its effectiveness for real-world robotic tasks.
\begin{table*}[!t]
    \centering
    \resizebox{\textwidth}{!}{%
    \small
    \renewcommand{\arraystretch}{1.4}
    \begin{tabular}{lccccccccc}
        \toprule
        \texttt{\textbf{Method}} & \texttt{\textbf{Close Jar}} & \texttt{\textbf{Drag Stick}} & \texttt{\textbf{Insert Peg}} & \texttt{\textbf{Meat off Grill}} & \texttt{\textbf{Open Drawer}} & \texttt{\textbf{Place Cups}} & \texttt{\textbf{Place Wine}} & \texttt{\textbf{Push Buttons}} & \texttt{\textbf{Put in Cupboard}} \\
        \midrule
        \texttt{\textbf{PerAct}} & $55.2$ & $89.6$ & $5.6$ & $70.4$ & $88.0$ & $2.4$ & $44.8$ & $92.8$ & $28.0$ \\
        \texttt{\textbf{RVT}} & $52.0$ & $99.2$ & $11.2$ & $88.0$ & $71.2$ & $4.0$ & $91.0$ & $\mathbf{100.0}$ & $49.6$ \\
        \texttt{\textbf{3D Actor}} & $96.0$ & $\mathbf{100.0}$ & $65.6$ & $96.8$ & $\mathbf{89.6}$ & $24.0$ & $93.6$ & $98.4$ & $\mathbf{85.6}$ \\
        \texttt{\textbf{RVT-2}} & $\mathbf{100.0}$ & $99.0$ & $40.0$ & $\mathbf{99.0}$ & $74.0$ & $38.0$ & $\mathbf{95.0}$ & $\mathbf{100.0}$ & $66.0$ \\
        \midrule
        \rowcolor[HTML]{a6e3e9}
        \texttt{\textbf{Ours}} & $96.0$ & $\mathbf{100.0}$ & $\mathbf{93.3}$ & $98.7$ & $82.0$ & $\mathbf{56.7}$ & $94.7$ & $\mathbf{100.0}$ & $72.0$ \\
        \bottomrule
    \end{tabular}}
    \vspace{0.3em}
    \resizebox{\textwidth}{!}{%
    \small
    \renewcommand{\arraystretch}{1.4}
    \begin{tabular}{lccccccccc}
        \toprule
        \texttt{\textbf{Method}} & \texttt{\textbf{Put in Drawer}} & \texttt{\textbf{Put in Safe}} & \texttt{\textbf{Screw Bulb}} & \texttt{\textbf{Slide Block}} & \texttt{\textbf{Sort Shape}} & \texttt{\textbf{Stack Blocks}} & \texttt{\textbf{Stack Cups}} & \texttt{\textbf{Sweep to Dustpan}} & \texttt{\textbf{Turn Tap}} \\
        \midrule
        \texttt{\textbf{PerAct}} & $51.2$ & $84.0$ & $17.6$ & $74.0$ & $16.8$ & $26.4$ & $2.4$ & $52.0$ & $88.0$ \\
        \texttt{\textbf{RVT}} & $88.0$ & $91.2$ & $48.0$ & $81.6$ & $36.0$ & $28.8$ & $26.4$ & $72.0$ & $93.6$ \\
        \texttt{\textbf{3D Actor}} & $96.0$ & $\mathbf{97.6}$ & $82.4$ & $\mathbf{97.6}$ & $44.0$ & $68.3$ & $47.2$ & $84.0$ & $\mathbf{99.2}$ \\
        \texttt{\textbf{RVT-2}} & $96.0$ & $96.0$ & $88.0$ & $92.0$ & $35.0$ & $\mathbf{80.0}$ & $69.0$ & $\mathbf{100.0}$ & $99.0$ \\
        \midrule
        \rowcolor[HTML]{a6e3e9}
        \texttt{\textbf{Ours}} & $\mathbf{98.7}$ & $96.7$ & $\mathbf{88.7}$ & $91.3$ & $\mathbf{73.3}$ & $56.0$ & $\mathbf{81.3}$ & $\mathbf{100.0}$ & $95.3$ \\
        \bottomrule
    \end{tabular}}
    \vspace{0.3em}
    \caption{\small \textbf{Performance on RLBench.} We report success rates on 18 RLBench tasks with 249 variations. Our method outperforms all baselines, achieving a significant performance margin over prior state-of-the-art methods.}
    \label{tab:rlbench_results}
\end{table*}
\begin{table*}[!t]
\centering
\footnotesize
\renewcommand{\arraystretch}{1.1}
\begin{tabularx}{\textwidth}{>{\raggedright\arraybackslash}p{2.2cm}>{\raggedright\arraybackslash}p{3.5cm}>{\centering\arraybackslash}X>{\centering\arraybackslash}X>{\centering\arraybackslash}X>{\centering\arraybackslash}X>{\centering\arraybackslash}X>{\centering\arraybackslash}X>{\centering\arraybackslash}X>{\centering\arraybackslash}X}
\toprule
\texttt{\textbf{Environment}} & \texttt{\textbf{Task}} & \texttt{\textbf{Diffuser}} & \texttt{\textbf{DD}} & \texttt{\textbf{HDMI}} & \texttt{\textbf{AD}} & \texttt{\textbf{SHD}} & \texttt{\textbf{DL}} & \texttt{\textbf{DV}} & \texttt{\textbf{Ours}} \\
\midrule
\multirow[c]{3}{*}{\texttt{\textbf{antmaze}}} & \texttt{\textbf{antmaze-medium-v0}} & $15$ {\tiny $\pm 5$} & $22$ {\tiny $\pm 4$} & $31$ {\tiny $\pm 6$} & $40$ {\tiny $\pm 5$} & $42$ {\tiny $\pm 7$} & $51$ {\tiny $\pm 6$} & $60$ {\tiny $\pm 4$} & \cellcolor[HTML]{a6e3e9}$\mathbf{72}$ {\tiny $\pm 3$} \\
 & \texttt{\textbf{antmaze-large-v0}} & $8$ {\tiny $\pm 3$} & $14$ {\tiny $\pm 4$} & $22$ {\tiny $\pm 5$} & $28$ {\tiny $\pm 6$} & $31$ {\tiny $\pm 5$} & $40$ {\tiny $\pm 4$} & $51$ {\tiny $\pm 5$} & \cellcolor[HTML]{a6e3e9}$\mathbf{65}$ {\tiny $\pm 1$} \\
 & \texttt{\textbf{antmaze-giant-v0}} & $2$ {\tiny $\pm 1$} & $5$ {\tiny $\pm 2$} & $11$ {\tiny $\pm 4$} & $15$ {\tiny $\pm 3$} & $19$ {\tiny $\pm 4$} & $27$ {\tiny $\pm 5$} & $35$ {\tiny $\pm 4$} & \cellcolor[HTML]{a6e3e9}$\mathbf{48}$ {\tiny $\pm 3$} \\
\cmidrule{1-10}
\multirow[c]{3}{*}{\texttt{\textbf{humanoidmaze}}} & \texttt{\textbf{humanoidmaze-medium-v0}} & $5$ {\tiny $\pm 2$} & $9$ {\tiny $\pm 3$} & $15$ {\tiny $\pm 4$} & $21$ {\tiny $\pm 5$} & $24$ {\tiny $\pm 4$} & $30$ {\tiny $\pm 3$} & $38$ {\tiny $\pm 5$} & \cellcolor[HTML]{a6e3e9}$\mathbf{51}$ {\tiny $\pm 4$} \\
 & \texttt{\textbf{humanoidmaze-large-v0}} & $2$ {\tiny $\pm 1$} & $5$ {\tiny $\pm 2$} & $9$ {\tiny $\pm 3$} & $13$ {\tiny $\pm 4$} & $16$ {\tiny $\pm 3$} & $22$ {\tiny $\pm 4$} & $29$ {\tiny $\pm 3$} & \cellcolor[HTML]{a6e3e9}$\mathbf{42}$ {\tiny $\pm 5$} \\
 & \texttt{\textbf{humanoidmaze-giant-v0}} & $0$ {\tiny $\pm 0$} & $1$ {\tiny $\pm 1$} & $3$ {\tiny $\pm 2$} & $5$ {\tiny $\pm 2$} & $7$ {\tiny $\pm 3$} & $11$ {\tiny $\pm 4$} & $16$ {\tiny $\pm 3$} & \cellcolor[HTML]{a6e3e9}$\mathbf{25}$ {\tiny $\pm 4$} \\
\cmidrule{1-10}
\multirow[c]{3}{*}{\texttt{\textbf{cube}}} & \texttt{\textbf{cube-single-v0}} & $5$ {\tiny $\pm 2$} & $8$ {\tiny $\pm 3$} & $12$ {\tiny $\pm 4$} & $18$ {\tiny $\pm 5$} & $20$ {\tiny $\pm 4$} & $25$ {\tiny $\pm 3$} & $32$ {\tiny $\pm 5$} & \cellcolor[HTML]{a6e3e9}$\mathbf{45}$ {\tiny $\pm 4$} \\
 & \texttt{\textbf{cube-double-v0}} & $2$ {\tiny $\pm 1$} & $4$ {\tiny $\pm 2$} & $7$ {\tiny $\pm 3$} & $11$ {\tiny $\pm 4$} & $13$ {\tiny $\pm 3$} & $17$ {\tiny $\pm 4$} & $23$ {\tiny $\pm 3$} & \cellcolor[HTML]{a6e3e9}$\mathbf{35}$ {\tiny $\pm 5$} \\
 & \texttt{\textbf{cube-triple-v0}} & $1$ {\tiny $\pm 1$} & $2$ {\tiny $\pm 1$} & $4$ {\tiny $\pm 2$} & $6$ {\tiny $\pm 3$} & $8$ {\tiny $\pm 2$} & $11$ {\tiny $\pm 3$} & $14$ {\tiny $\pm 4$} & \cellcolor[HTML]{a6e3e9}$\mathbf{17}$ {\tiny $\pm 3$} \\
\cmidrule{1-10}
\texttt{\textbf{scene}} & \texttt{\textbf{scene-v0}} & $10$ {\tiny $\pm 3$} & $16$ {\tiny $\pm 4$} & $24$ {\tiny $\pm 5$} & $33$ {\tiny $\pm 4$} & $37$ {\tiny $\pm 6$} & $45$ {\tiny $\pm 5$} & $56$ {\tiny $\pm 3$} & \cellcolor[HTML]{a6e3e9}$\mathbf{70}$ {\tiny $\pm 4$} \\
\cmidrule{1-10}
\multirow[c]{4}{*}{\texttt{\textbf{puzzle}}} & \texttt{\textbf{puzzle-3x3-v0}} & $8$ {\tiny $\pm 3$} & $13$ {\tiny $\pm 4$} & $20$ {\tiny $\pm 5$} & $28$ {\tiny $\pm 3$} & $31$ {\tiny $\pm 5$} & $39$ {\tiny $\pm 4$} & $49$ {\tiny $\pm 6$} & \cellcolor[HTML]{a6e3e9}$\mathbf{62}$ {\tiny $\pm 3$} \\
  & \texttt{\textbf{puzzle-4x4-v0}} & $0$ {\tiny $\pm 0$} & $0$ {\tiny $\pm 0$} & $0$ {\tiny $\pm 0$} & $2$ {\tiny $\pm 3$} & $5$ {\tiny $\pm 4$} & $11$ {\tiny $\pm 3$} & $19$ {\tiny $\pm 5$} & \cellcolor[HTML]{a6e3e9}$\mathbf{32}$ {\tiny $\pm 4$} \\
  & \texttt{\textbf{puzzle-4x5-v0}} & $0$ {\tiny $\pm 0$} & $0$ {\tiny $\pm 0$} & $0$ {\tiny $\pm 0$} & $0$ {\tiny $\pm 0$} & $5$ {\tiny $\pm 3$} & $6$ {\tiny $\pm 3$} & $8$ {\tiny $\pm 4$} & \cellcolor[HTML]{a6e3e9}$\mathbf{16}$ {\tiny $\pm 5$} \\
  & \texttt{\textbf{puzzle-4x6-v0}} & $0$ {\tiny $\pm 0$} & $0$ {\tiny $\pm 0$} & $0$ {\tiny $\pm 0$} & $0$ {\tiny $\pm 0$} & $0$ {\tiny $\pm 0$} & $0$ {\tiny $\pm 0$} & $6$ {\tiny $\pm 4$} & \cellcolor[HTML]{a6e3e9}$\mathbf{14}$ {\tiny $\pm 3$} \\
\bottomrule
\end{tabularx}
\vspace{3pt}
\caption{\small \textbf{Performance on OGBench.} We report the average binary success rate (\%) across five test-time goals for each task, averaged over $8$ seeds. Standard deviations are indicated by the $\pm$ symbol.}
\vspace{-3pt}
\label{tab:ogbench_results}
\end{table*}
\subsection{RLBench Experiments}
\textbf{Tasks and Environment.} We further evaluate our method on RLBench~\citep{james2020rlbench}, a challenging, large-scale benchmark designed to facilitate research in various robot learning areas, including imitation learning and multi-task learning. We specifically consider the subset of 18 tasks used in the PerAct framework~\citep{shridhar2022peract}. Further details about the tasks and dataset collection are provided in Appendix~\ref{app:rlbench_experiments}

\textbf{Baselines.} We compare \texttt{\textbf{HDFlow}} with state-of-the-art methods on RLBench benchmark like \texttt{\textbf{PerAct}}~\citep{shridhar2022peract}, \texttt{\textbf{RVT}}~\citep{goyal2023rvt}, \texttt{\textbf{3D Diffuser Actor}}~\citep{ke20253d}, and \texttt{\textbf{RVT-2}}~\citep{goyal2024rvt}.

\textbf{Analysis.} Table~\ref{tab:rlbench_results} demonstrates the strong performance and generalization capabilities of \texttt{\textbf{HDFlow}} across the diverse and challenging RLBench tasks. Our method consistently achieves competitive or superior success rates compared to state-of-the-art baselines. Specifically, \texttt{\textbf{HDFlow}} achieves 100\% success rate on \texttt{Drag Stick} and \texttt{Push Buttons} (tying with other strong baselines), and significantly outperforms others on tasks such as \texttt{Insert Peg} (93.3\% vs. 65.6\% by 3D Diffuser Actor) and \texttt{Sort Shape} (73.3\% vs. 44.0\% by 3D Diffuser Actor and 35.0\% by RVT-2). This highlights that \texttt{\textbf{HDFlow}} effectively leverages its hierarchical planning framework, structured latent space, and guided generative models to handle the complexities of long-horizon manipulation, diverse object variations, and intricate sequential tasks.

\begin{table*}[!t]
\centering
\footnotesize
\renewcommand{\arraystretch}{1.05}
\begin{tabularx}{\textwidth}{>{\centering\arraybackslash}p{1.2cm}|>{\centering\arraybackslash}X>{\centering\arraybackslash}X>{\centering\arraybackslash}X|>{\centering\arraybackslash}X>{\centering\arraybackslash}X>{\centering\arraybackslash}X|>{\centering\arraybackslash}X>{\centering\arraybackslash}X>{\centering\arraybackslash}X|>{\centering\arraybackslash}X>{\centering\arraybackslash}X}
\toprule
\texttt{\textbf{Subgoals}} & \multicolumn{3}{c|}{\texttt{\textbf{one\_leg}}} 
& \multicolumn{3}{c|}{\texttt{\textbf{lamp}}} 
& \multicolumn{3}{c|}{\texttt{\textbf{round\_table}}} 
& \multicolumn{2}{c}{\texttt{\textbf{cabinet}}} \\
\cmidrule(lr){1-1} \cmidrule(lr){2-4} \cmidrule(lr){5-7} \cmidrule(lr){8-10} \cmidrule(lr){11-12}
\texttt{\textbf{K}} & \texttt{\textbf{Low}} & \texttt{\textbf{Med}} & \texttt{\textbf{High}} & \texttt{\textbf{Low}} & \texttt{\textbf{Med}} & \texttt{\textbf{High}} & \texttt{\textbf{Low}} & \texttt{\textbf{Med}} & \texttt{\textbf{High}} & \texttt{\textbf{Low}} & \texttt{\textbf{Med}}\\
\midrule
10 & 61 & 21 & 2 & 33 & 13 & 2 & 23 & 9 & 0 & 6 & 0 \\
15 & 78 & 35 & 10 & 52 & 25 & 8 & 40 & 18 & 5 & 25 & 10 \\
\rowcolor[HTML]{a6e3e9}
20 & \textbf{92} & \textbf{71} & \textbf{39} & \textbf{68} & \textbf{49} & \textbf{34} & \textbf{61} & \textbf{43} & \textbf{27} & \textbf{55} & \textbf{36} \\
25 & 89 & 60 & 30 & 63 & 45 & 25 & 58 & 40 & 20 & 50 & 30 \\
30 & 85 & 45 & 15 & 60 & 35 & 12 & 50 & 28 & 10 & 35 & 18 \\
\bottomrule
\end{tabularx}
\vspace{5pt}
\caption{\small Ablation study on the choice of total subgoals \(K\). Success rates (\%) of \texttt{\textbf{HDFlow}} on all tasks with respective randomization levels for different \(K\) values, highlighting the importance of temporal abstraction for optimal performance.}
\label{tab:subgoal_interval_ablation}
\vspace{-8pt}
\end{table*}

\subsection{OGBench Experiments}
\textbf{Tasks and Environment.} We further evaluate our method on OGBench \citep{park2025ogbench}, a benchmark specifically designed for offline goal-conditioned reinforcement learning (RL) that assesses capabilities such as long-horizon reasoning and handling stochasticity across diverse and challenging tasks. We consider visual versions of locomotion tasks, including \texttt{\textbf{antmaze}} and \texttt{\textbf{humanoidmaze}}, and manipulation tasks, including \texttt{\textbf{cube}}, \texttt{\textbf{scene}}, and \texttt{\textbf{puzzle}}. Further details about the tasks and dataset collection are provided in Appendix~\ref{app:ogbench_experiments}

\textbf{Baselines.} We compare \texttt{\textbf{HDFlow}} with prior state-of-the-art diffusion planners like \texttt{\textbf{Diffuser}}~\citep{janner2022planning}, Decision Diffuser (\texttt{\textbf{DD}})~\citep{ajayconditional}, AdaptDiffuser (\texttt{\textbf{AD}})~\citep{pmlr-v202-liang23e}, DiffuserLite (\texttt{\textbf{DL}})~\citep{dong2024diffuserlite}, Diffusion Veteran (\texttt{\textbf{DV}})~\citep{lu2025what}; and hierarchical diffusion planners like \texttt{\textbf{HDMI}}~\citep{li2023hierarchical} and \texttt{\textbf{SHD}}~\citep{chensimple}.

\textbf{Analysis.} Table~\ref{tab:ogbench_results} shows that our approach consistently achieves superior performance across all environments. Specifically, in the challenging \texttt{antmaze-giant-v0} and \texttt{puzzle-4x4-v0} task, our method obtains a success rate of $\mathbf{48}\%$ and $\mathbf{32}\%$, significantly outperforming the next best method, DV, which achieves $35\%$ and $19\%$ respectively. We notice that our method keeps performing well even as the hardness of the task increases, this can be seen on \texttt{puzzle-4x6-v0} task where most of the baselines fail, our approach achieves a success rate of $14\%$. These results highlight the effectiveness and robustness of \texttt{\textbf{HDFlow}} in diverse and complex long-horizon planning tasks.

\subsection{Performance under different total subgoals}
We investigate the impact of the total number of subgoals \(K\) on the performance of \texttt{\textbf{HDFlow}}. The total number of subgoals dictates the temporal abstraction of our hierarchical planner, influencing both the complexity of high-level subgoals and the length of low-level trajectories. We conduct experiments on all tasks with varying randomness, varying \(K\) and report the success rates in Table~\ref{tab:subgoal_interval_ablation}.

The results indicate that an optimal total number of subgoals \(K\) exists for each task. A very small \(K\) (e.g., 10) leads to an overly coarse high-level plan, requiring the low-level rectified flow model to bridge larger gaps in latent space, which can be challenging. Conversely, a very large \(K\) (e.g., 30) makes the high-level diffusion model generate too many subgoals, which can accumulate errors. A value of \(K=20\) consistently yields the highest success rates, representing a balance where the high-level planner provides meaningful strategic guidance, and the low-level planner can efficiently and accurately execute the dense trajectories. This ablation highlights the importance of carefully tuning the temporal abstraction level for optimal performance in hierarchical planning.
\section{Conclusion}
In this work, we introduced \texttt{\textbf{HDFlow}}, a novel hierarchical planning framework that effectively addresses long-horizon robotic tasks by synergistically combining the strengths of diffusion and rectified flow models. Our approach leverages a contrastively-trained world model to learn a semantically structured latent space, a manifold-aware EBM-guided high-level diffusion planner for strategic subgoal generation, and a fast low-level rectified flow planner for efficient trajectory synthesis. We demonstrated state-of-the-art performance on challenging FurnitureBench, RLBench and OGBench tasks, showcasing the robustness and efficiency of \texttt{\textbf{HDFlow}} in both simulation and real-world settings.

\textbf{Limitations.} Despite its significant performance, \texttt{\textbf{HDFlow}} has several limitations that suggest avenues for future research. Currently, our framework relies on a dataset of successful and failed demonstrations for training the world model and the EBM. While effective, collecting such data can be resource-intensive.

\vspace{-4pt}
\section*{Impact Statement}
\vspace{-2pt}
This paper advances the field of Machine Learning through a new hierarchical planning method for long-horizon tasks. We do not identify any direct negative societal impacts that must be specifically highlighted. However, we encourage practitioners to apply this work responsibly and evaluate real-world safety implications before deployment.
\vspace{-4pt}
\section*{Acknowledgement}
\vspace{-2pt}
We would like to thank Yuanchen Ju, Swati Dantu and Ananth Rachakonda for valuable discussion and feedback.

\bibliography{icml2026}
\bibliographystyle{icml2026}

\newpage
\appendix
\onecolumn
\section{Theoretical Framework}
\label{sec:appendix_proofs}

This section provides the detailed mathematical proofs for the propositions made in the previous sections.

\subsection{Proof for Proposition on EBM Guidance Gap}
\label{proof:guidance}
\textbf{Statement.} \textit{The EBM guidance gap $\Delta_{\text{EBM}}(z_\ell)$ has a lower bound scaling as $\frac{c}{\sqrt{1-\bar{\alpha}_\ell}}\sqrt{d}$ in high-dimensional latent spaces, where $c > 0$ is a constant independent of dimensionality $d$. (this was extended from ~\citep{lee2025local} to the goal-conditioned, EBM-guided planning setting.)}

\textbf{Proof.}
We aim to provide a more detailed derivation for the lower bound of the EBM guidance gap in high-dimensional latent spaces. The core of this issue stems from the discrepancy between the true optimal energy guidance and our learned EBM approximation, particularly in how they weight successful plans.
\begin{enumerate}
    \item \textbf{Forward Process and Conditional Score.}
    The forward diffusion process defines how a clean latent state $z_0$ is noised to $z_\ell$ at timestep $\ell$:
    \[
    z_\ell = \sqrt{\bar{\alpha}_\ell} z_0 + \sqrt{1-\bar{\alpha}_\ell} \epsilon, \quad \epsilon \sim \mathcal{N}(\mathbf{0}, \mathbf{I})
    \]
    From this, the conditional distribution $q(z_0|z_\ell)$ can be expressed as a Gaussian with mean $\mu(z_\ell, \ell) = \frac{1}{\sqrt{\bar{\alpha}_\ell}} (z_\ell - \sqrt{1-\bar{\alpha}_\ell} \epsilon)$ and variance $\Sigma(\ell) = (1-\bar{\alpha}_\ell)\mathbf{I}$.
    The gradient of the log-probability of this conditional distribution with respect to $z_\ell$ is crucial for relating scores in $z_0$ space to $z_\ell$ space:
    \[
    \nabla_{z_\ell} \log q(z_0|z_\ell) = \nabla_{z_\ell} \log \mathcal{N}\left(z_0; \frac{1}{\sqrt{\bar{\alpha}_\ell}} (z_\ell - \sqrt{1-\bar{\alpha}_\ell} \epsilon), (1-\bar{\alpha}_\ell)\mathbf{I}\right)
    \]
    This simplifies to:
    \[
    \nabla_{z_\ell} \log q(z_0|z_\ell) = -\frac{1}{\sqrt{1-\bar{\alpha}_\ell}} \epsilon
    \]
    \item \textbf{True Optimal Energy Guidance.}
    The true optimal energy guidance, $\nabla_{z_\ell} E_{\text{true}}(z_\ell|c)$, aims to steer the diffusion process towards regions of low energy (high success probability) in the $z_0$ space. This gradient is given by the expectation of the score of $q(z_0|z_\ell)$ weighted by the exponential of the negative energy function, effectively performing importance sampling towards more successful $z_0$ configurations:
    \[
    \nabla_{z_\ell} E_{\text{true}}(z_\ell|c) = \frac{\mathbb{E}_{q(z_0|z_\ell)}[e^{-E(z_0|c)} \nabla_{z_\ell} \log q(z_0|z_\ell)]}{\mathbb{E}_{q(z_0|z_\ell)}[e^{-E(z_0|c)}]}
    \]
    Substituting the expression for $\nabla_{z_\ell} \log q(z_0|z_\ell)$ and assuming $\mathbb{E}_{q(z_0|z_\ell)}[e^{-E(z_0|c)}]$ is a normalizing constant, we get:
    \[
    \nabla_{z_\ell} E_{\text{true}}(z_\ell|c) = \frac{1}{\sqrt{1-\bar{\alpha}_\ell}} \frac{\mathbb{E}_{q(z_0|z_\ell)}[e^{-E(z_0|c)} (-\epsilon)]}{\mathbb{E}_{q(z_0|z_\ell)}[e^{-E(z_0|c)}]} = -\frac{1}{\sqrt{1-\bar{\alpha}_\ell}} \mathbb{E}_{q(z_0|z_\ell)}[\epsilon | e^{-E(z_0|c)}]
    \]
    where $\mathbb{E}[\epsilon | e^{-E(z_0|c)}]$ denotes the expectation of $\epsilon$ conditioned on $z_0$ being sampled with a probability proportional to $e^{-E(z_0|c)}$.
    \item \textbf{Learned EBM Guidance.}
    Our learned EBM guidance, $\nabla_{z_\ell} E_\phi(z_\ell|c)$, typically approximates the gradient of the energy function at $z_\ell$. In many practical implementations, this effectively corresponds to a linear weighting of the score of $q(z_0|z_\ell)$ by the energy function itself, rather than its exponential:
    \[
    \nabla_{z_\ell} E_\phi(z_\ell|c) \approx \mathbb{E}_{q(z_0|z_\ell)}[E(z_0|c) \nabla_{z_\ell} \log q(z_0|z_\ell)] = -\frac{1}{\sqrt{1-\bar{\alpha}_\ell}} \mathbb{E}_{q(z_0|z_\ell)}[E(z_0|c) \epsilon]
    \]
    This approximation introduces a discrepancy because the relationship between the energy $E(z_0|c)$ and the success probability $p(y=1|z_0,c)$ is exponential ($p(y=1|z_0,c) \propto e^{-E(z_0|c)}$), not linear.
    \item \textbf{Analysis of the Guidance Gap.}
    The EBM guidance gap is defined as $\Delta_{\text{EBM}}(z_\ell) = \left\|\nabla_{z_\ell} E_{\text{true}}(z_\ell|c) - \nabla_{z_\ell} E_\phi(z_\ell|c)\right\|_2$.
    Substituting the expressions, we get:
    \[
    \Delta_{\text{EBM}}(z_\ell) = \left\| -\frac{1}{\sqrt{1-\bar{\alpha}_\ell}} \left( \mathbb{E}_{q(z_0|z_\ell)}[\epsilon | e^{-E(z_0|c)}] - \mathbb{E}_{q(z_0|z_\ell)}[E(z_0|c) \epsilon] \right) \right\|_2
    \]
    Let $\delta(z_0) = \frac{e^{-E(z_0|c)}}{\mathbb{E}_{q(z_0|z_\ell)}[e^{-E(z_0|c)}]} - E(z_0|c)$ represent the difference between the ideal exponential weighting (normalized) and the approximate linear weighting. The term in the parenthesis can be viewed as $\mathbb{E}_{q(z_0|z_\ell)}[\delta(z_0) \epsilon]$.
    In high-dimensional latent spaces (i.e., when $d$ is large), the noise vector $\epsilon$ typically has a magnitude of approximately $\sqrt{d}$ (i.e., $\|\epsilon\|_2 \approx \sqrt{d}$). Furthermore, the components of $\epsilon$ are largely independent. Due to the Central Limit Theorem and concentration inequalities, even small biases in the weighting function $\delta(z_0)$ can lead to a significant accumulated error when multiplied by a high-dimensional random vector.
    Specifically, if $\mathbb{E}_{q(z_0|z_\ell)}[\delta(z_0)] \neq 0$, the term $\|\mathbb{E}_{q(z_0|z_\ell)}[\delta(z_0) \epsilon]\|_2$ will tend to scale with $\sqrt{d}$ in expectation, as the contributions from different dimensions accumulate.
\end{enumerate}

Therefore, there exists a constant $c > 0$ (which depends on the magnitude of the mismatch $\delta(z_0)$ and the properties of the noise distribution) such that:
    \[
    \Delta_{\text{EBM}}(z_\ell) \ge \frac{c}{\sqrt{1-\bar{\alpha}_\ell}}\sqrt{d}
    \]
This lower bound shows that the inaccuracies in energy guidance are exacerbated in high-dimensional latent spaces and as the diffusion process approaches $z_0$ (i.e., as $\ell \to 0$, $1-\bar{\alpha}_\ell \to 0$, making the term $\frac{1}{\sqrt{1-\bar{\alpha}_\ell}}$ large). This leads sampled trajectories to deviate significantly from the true data manifold. $\square$

\subsection{Proof for Proposition on Manifold-Aware Guided Planning}
\textbf{Statement.} \textit{Given a base diffusion planner $\epsilon_\theta$ trained for classifier-free guidance, a learned energy function $E_\phi(z|c)$, and a manifold projection operator $\mathcal{P}_{\mathcal{M}}$, the manifold-aware guided planner, which combines EBM guidance with manifold projection, corresponds to sampling from a posterior distribution $p(z|y=1, z \in \mathcal{M}, c)$ that maximizes the likelihood of generating a successful and feasible goal-conditioned plan.}

\textbf{Proof.}
The manifold-aware guidance process can be viewed as implementing constrained Bayesian inference. We seek to sample from the posterior:
\[
p(z|y=1, z \in \mathcal{M}, c) \propto p(y=1|z,c) p(z|c) \mathbf{1}[z \in \mathcal{M}]
\]
where $\mathbf{1}[z \in \mathcal{M}]$ is the indicator function for the feasible manifold.

The two-step process approximates this constrained posterior:
\begin{enumerate}
    \item The guided sampling step samples from $p(y=1|z,c) p(z|c)$, implementing the unconstrained Bayesian posterior. This is achieved by combining classifier-free guidance for the conditional term $p(z|c)$ and EBM guidance for the success term $p(y=1|z,c)$, as detailed in Proof~\ref{proof:emb_guidance}
    \item The projection step enforces the manifold constraint $z \in \mathcal{M}$ by mapping to the closest point on the approximated manifold.
\end{enumerate}

By the principle of alternating projections and the contraction property of projection operators, this two-step process converges to a point that balances optimality (high success probability) with feasibility (remaining on the manifold). The approximation error depends on the quality of the local manifold approximation, which improves with the number of neighbors $k$ and the intrinsic dimensionality of the subgoal space. This shows that our combined guidance approach is a principled implementation of Bayes-optimal sampling, steering the generative process towards plans that are both relevant to the goal and likely to succeed, while remaining on the feasible manifold. $\square$

\subsection{Proof for EBM-based Guidance}
\label{proof:emb_guidance}
\textbf{Statement.} \textit{Given a base diffusion planner $\epsilon_\theta$ trained for classifier-free guidance and a learned energy function $E_\phi(z|c)$ that estimates the probability of success conditioned on context $c$, the EBM-guided component of the planner corresponds to sampling from a posterior distribution that maximizes the likelihood of generating a successful, goal-conditioned plan.}

\textbf{Proof.}
The proof proceeds in two steps. First, we derive the theoretically optimal score function for sampling successful, goal-conditioned plans using Bayes' rule. Second, we show how our combined guidance mechanism implements an effective approximation of this optimal score.

1.  \textbf{Deriving the Optimal Score Function.}
    Our goal is to sample from the posterior distribution $p(z|y=1, c)$, which is the distribution of plans $z$ that are successful ($y=1$) given the context $c=(z_0, z_G)$. Using Bayes' rule, we can express this posterior as:
    \[ p(z|y=1, c) = \frac{p(y=1|z, c) p(z|c)}{p(y=1|c)} \propto p(y=1|z, c) p(z|c) \]
    Taking the logarithm, we get:
    \[ \log p(z|y=1, c) = \log p(y=1|z, c) + \log p(z|c) + \text{const.} \]
    The score function is the gradient of the log-probability with respect to the plan $z$. The optimal score for our desired posterior is therefore:
    \[ \nabla_z \log p(z|y=1, c) = \nabla_z \log p(y=1|z, c) + \nabla_z \log p(z|c) \]
    This equation tells us that the optimal guided score is the sum of two terms:
    \begin{itemize}
        \item $\nabla_z \log p(z|c)$: The score of the original conditional planner. This term ensures the plan is \textbf{relevant} to the context $c$.
        \item $\nabla_z \log p(y=1|z, c)$: The gradient of the log-probability of success. This term ensures the plan is \textbf{viable} and likely to succeed.
    \end{itemize}

2.  \textbf{Implementing the Score with Combined Guidance.}
    Our framework approximates each of these two terms:
    \begin{itemize}
        \item \textbf{EBM Guidance for Success:} The Energy-Based Model $E_\phi(z|c)$ is trained to model the success probability. We define $p(y=1|z, c) \propto \exp(-E_\phi(z|c))$, where low energy corresponds to high success probability. The gradient of the log-probability of success is therefore directly related to the gradient of the energy function:
          \[ \nabla_z \log p(y=1|z, c) = -\nabla_z E_\phi(z|c) \]
          This is the \textbf{EBM Guidance} term in our equation.

        \item \textbf{CFG for Conditional Relevance:} The base diffusion model, $\epsilon_\theta$, is trained to predict the noise, which is proportional to the score. The term $\nabla_z \log p(z|c)$ is the score of the conditional model. Classifier-Free Guidance (CFG) is a technique to strengthen this conditioning at inference time. The CFG-adjusted score is:
          \[ \hat{\nabla}_z \log p(z|c) \approx \nabla_z \log p(z|\emptyset) + w_{cfg}(\nabla_z \log p(z|c) - \nabla_z \log p(z|\emptyset)) \]
          This is the \textbf{Classifier-Free Guidance} term in our equation, expressed in terms of the noise predictions $\epsilon_\theta$.
    \end{itemize}

    By combining these two approximations, our final guided noise prediction implements the theoretically optimal score:
    \[ \hat{\epsilon}_\theta(z_\ell, \ell, c) = \underbrace{\epsilon_\theta(z_\ell, \ell, \emptyset) + w_{cfg}(\epsilon_\theta(z_\ell, \ell, c) - \epsilon_\theta(z_\ell, \ell, \emptyset))}_{\text{Approximates } \nabla_z \log p(z|c) \text{ via CFG}} - \underbrace{w_{ebm} \sqrt{1-\bar{\alpha}_\ell} \nabla_{z_\ell} E_\phi(z_\ell | c)}_{\text{Approximates } \nabla_z \log p(y=1|z,c) \text{ via EBM}} \]
    This shows that our combined guidance approach is a principled implementation of Bayes-optimal sampling, steering the generative process towards plans that are both relevant to the goal and likely to succeed. $\square$

\newpage
\section{Tasks and Dataset}
\label{app:task}
\subsection{FurnitureBench Benchmark}
\label{app:furbench_experiments}
\begin{figure*}[h]
    \centering
    \includegraphics[width=\linewidth]{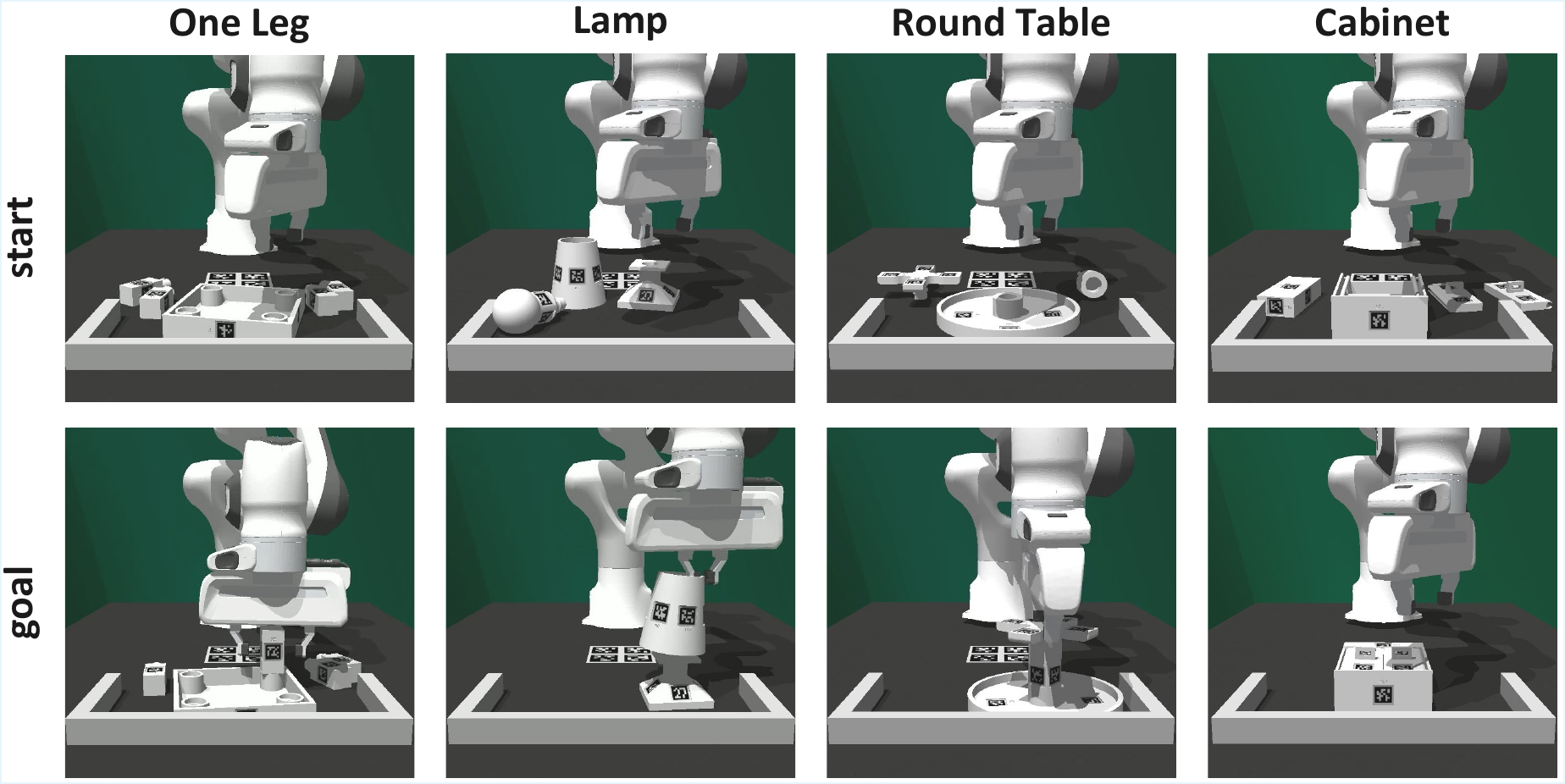}
    \caption{Overview of tasks from FurnitureBench in simulation. }
    \label{fig:tasks}
\end{figure*}

FurnitureBench~\citep{heo2025furniturebench} is a novel furniture assembly benchmark for testing complex, long-horizon manipulation tasks. We choose a subset of 4 tasks from the available 9 tasks in the benchmark. The tasks involve assembling various pieces of furniture from individual parts using a simulated Franka Emika Panda robot in a IsaacGym environment. The robot receives multi-modal observations, including front and wrist RGBD images and its proprioceptive state. The goal is to assemble the furniture correctly, which requires a sequence of precise manipulations over a long time horizon as shown in Fig~\ref{fig:tasks}. Each task comes with three different levels with respect to the randomness in the initial furniture part configuration, making the manipulation task more challenging.
\begin{itemize}
    \item \textbf{Low}: Furniture parts are randomly offset from their original positions by $[-1.5, 1.5]$ cm in the horizontal plane
    \item \textbf{Med}: Furniture parts are randomly offset from their original positions by $[-5, 5]$ cm and from their original rotations by $[-45^\circ, 45^\circ]$ in the horizontal plane.
    \item \textbf{High}: Furniture parts are randomly initialized on the workspace.
\end{itemize}

Below we describe each task in detail:
\begin{enumerate}    
    \item \textbf{One Leg} \begin{itemize}
        \item \textbf{Task}: The task is to assemble one leg of a table. First, it has to stabilize the tabletop in one corner of a U-shaped wall, then it has to grasp, insert and screw the leg in its goal position.
        \item \textbf{Phases}: 5 phases in total.
        \item \textbf{Success Metric}: Assemble 2 parts in their goal positions.
        \item \textbf{Max. Episode Length}: 700
        \end{itemize}
    \item \textbf{Lamp} \begin{itemize}
        \item \textbf{Task}: The task is to assemble one lamp base, bulb, and lamp hood. First, it has to stabilize the lamp base in one corner of a U-shaped wall. Then, it has to grasp, insert and screw the bulb into the lamp base. Finally, it has to grasp, and insert the lamp hood into the bulb.
        \item \textbf{Phases}: 7 phases in total.
        \item \textbf{Success Metric}: Assemble 3 parts in their goal positions.
        \item \textbf{Max. Episode Length}: 1100
        \end{itemize}
    \item \textbf{Round Table} \begin{itemize}
        \item \textbf{Task}: The task is to assemble one round tabletop, leg, and table base. First, it has to stabilize the round tabletop in one corner of a U-shaped wall. Then, it has to grasp, insert and screw the leg into the tabletop. Finally, it has to grasp, insert and screw the table base into the leg.
        \item \textbf{Phases}: 8 phases in total.
        \item \textbf{Success Metric}: Assemble 3 parts in their goal positions.
        \item \textbf{Max. Episode Length}: 1500
        \end{itemize}
    \item \textbf{Cabinet} \begin{itemize}
        \item \textbf{Task}: The task is to assemble one cabinet body, left door, right door, and cabinet top. First, it has to stabilize the cabinet body in one corner of a U-shaped wall, then it has to grasp, insert and slide each door in its goal position. Then, it has to flip the cabinet body along with the assembled doors orthogonally. Finally, it has to grasp, insert and screw the cabinet top.
        \item \textbf{Phases}: 11 phases in total.
        \item \textbf{Success Metric}: Assemble 4 parts in their goal positions.
        \item \textbf{Max. Episode Length}: 1500
        \end{itemize}
\end{enumerate}

\textbf{Simulation.} We use a scripted policy provided by the original benchmark to collect $100$ successful and $50$ failure demonstrations for each task and randomness type. Unfortunately, we were unable to collect any successful trajectories with high randomness initialization for cabinet task, so we only considered low and med initial randmoness.

\textbf{Real-world.} Following~\citep{ankile2024juicer,ankile2024imitation,ren2024diffusion}, we use 3DConnexion SpaceMouse, a 6DoF end-effector control device for teleoperated dataset collection. For each task and randomness type, we collect 50 successful demonstrations and use them to finetune our pretrained world model and hierarchical planners.

\textbf{Observation Space.} The observation space for our world model consists of two RGBD images from the front and wrist camera and robot proprioceptive states. The front and wrist camera RGB images are first resized from $1280 \times 720$ to $320 \times 240$, and then center cropped to $224 \times 224$. The robot proprioceptive state consists of current end-effector (EE) state and gripper width. In particular, $3$ dimensional EE position, $4$ dimensional EE orientation, $3$ dimensional linear velocity, $3$ dimensional rotational velocity, and $1$ dimensional gripper width.

\textbf{Action Space.} We use an $8$D action space, which consists of $3$ dimensional delta EE position, $4$ dimensional delta EE orientation (quaternion), and $1$ dimensional gripper action. The action space is bounded between $-1$ and $+1$.

\textbf{Goal Specification.} We initialize the furniture in its final assembled state and capture the front and wrist RGBD images as well as the robot's proprioceptive state. These are then passed through the pre-trained world model's encoder to produce a fixed latent goal vector $z_G$. This vector is then used as the conditioning for the planner in all subsequent experiments for that task.

\subsection{RLBench Benchmark}
\label{app:rlbench_experiments}
RLBench~\citep{james2020rlbench} is a challenging, large-scale benchmark designed to facilitate research in various robot learning areas, including imitation learning and multi-task learning, by providing a diverse set of tasks with realistic visual observations and an infinite supply of demonstrations. This makes it an ideal platform to evaluate our hierarchical planning framework, which relies on learning from demonstrations and aims to solve long-horizon problems.

\textbf{Tasks Overview.} We specifically consider the 18 tasks from the RLBench benchmark that were also used in the PerAct framework~\citep{shridhar2022peract}. These tasks encompass a wide variety of 6-DoF manipulation challenges, including both prehensile and non-prehensile behaviors, and feature numerous semantic and pose variations. The tasks are: \texttt{\textbf{close jar}}, \texttt{\textbf{drag stick}}, \texttt{\textbf{insert peg}}, \texttt{\textbf{meat off grill}}, \texttt{\textbf{open drawer}}, \texttt{\textbf{place cups}}, \texttt{\textbf{place wine}}, \texttt{\textbf{push buttons}}, \texttt{\textbf{put in cupboard}}, \texttt{\textbf{put in drawer}}, \texttt{\textbf{put in safe}}, \texttt{\textbf{screw bulb}}, \texttt{\textbf{slide block}}, \texttt{\textbf{sort shape}}, \texttt{\textbf{stack blocks}}, \texttt{\textbf{stack cups}}, \texttt{\textbf{sweep to dustpan}}, and \texttt{\textbf{turn tap}}. These tasks often involve variations in object colors, sizes, shapes, counts, and placements, requiring the agent to generalize across different semantic instantiations and novel object poses.

\begin{figure*}[h!]
    \centering
    \includegraphics[width=1\linewidth]{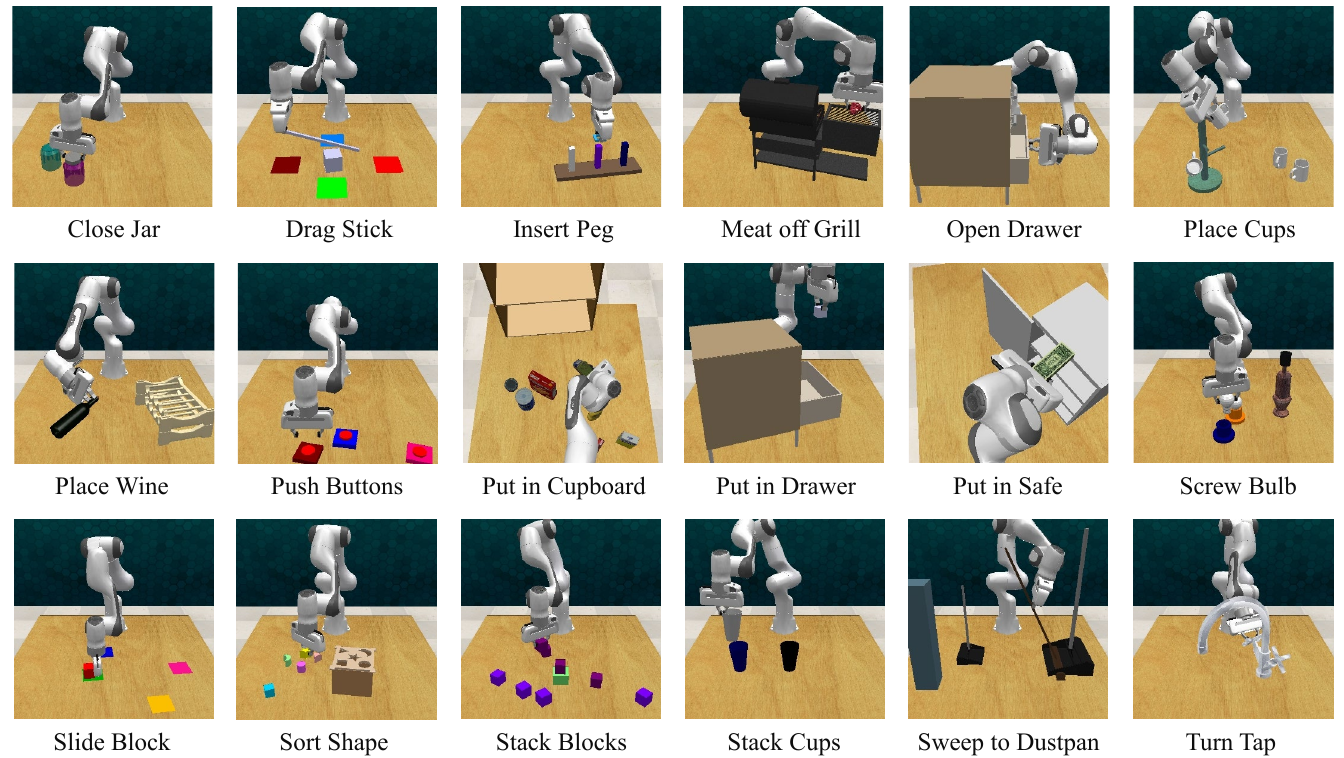}
    \caption{\small \textbf{RLBench Manipulation Tasks.} We evaluate \texttt{\textbf{HDFlow}} on 18 simulated RLBench tasks, covering 249 variations of object poses, goal configurations, and scene appearances. During evaluation, the robot must complete each task under randomized colors, shapes, sizes, and semantic arrangements.}
    \label{fig:rlbench_tasks}
\end{figure*}

\textbf{Data Generation.} Following the methodology of PerAct~\citep{shridhar2022peract} and the capabilities of RLBench~\citep{james2020rlbench}, we generate datasets of successful and failure demonstrations for each of the 18 tasks. RLBench provides expert policies that use motion planners to generate an infinite supply of demonstrations. For our experiments, we collect 100 successful demonstrations and 50 failure demonstrations for each task, ensuring a diverse dataset that captures both successful strategies and common failure modes. These demonstrations consist of RGB-D observations from multiple cameras (front, left shoulder, right shoulder, and wrist) and robot proprioceptive states, along with corresponding 6-DoF end-effector actions. The failure demonstrations are generated by introducing perturbations to the expert trajectories or by stopping trajectories prematurely. This rich dataset is essential for training the contrastive world model and the EBM-guided high-level planner in \texttt{\textbf{HDFlow}}.

\subsection{OGBench Benchmark}
\label{app:ogbench_experiments}

OGBench \citep{park2025ogbench} is a benchmark specifically designed for offline goal-conditioned reinforcement learning (RL) that assesses capabilities such as long-horizon reasoning and handling stochasticity across diverse and challenging tasks. This benchmark allows us to thoroughly evaluate our method on a planning benchmark.

\textbf{Tasks Overview.} We consider visual versions of locomotion tasks, including \texttt{\textbf{antmaze}} and \texttt{\textbf{humanoidmaze}}, and manipulation tasks, including \texttt{\textbf{cube}}, \texttt{\textbf{scene}}, and \texttt{\textbf{puzzle}}.
\begin{itemize}
    \item \textbf{Maze Tasks (\texttt{antmaze}, \texttt{humanoidmaze}):} These tasks require the agent to learn both high-level maze navigation and low-level locomotion skills, involving high-dimensional control, purely from diverse offline trajectories.
    \item \textbf{Manipulation Tasks (\texttt{cube}, \texttt{scene}, \texttt{puzzle}):} These tasks are designed to test the agent's object manipulation, sequential generalization, and combinatorial generalization abilities.
\end{itemize}

\begin{figure*}[h!]
    \centering
    \includegraphics[width=0.75\linewidth]{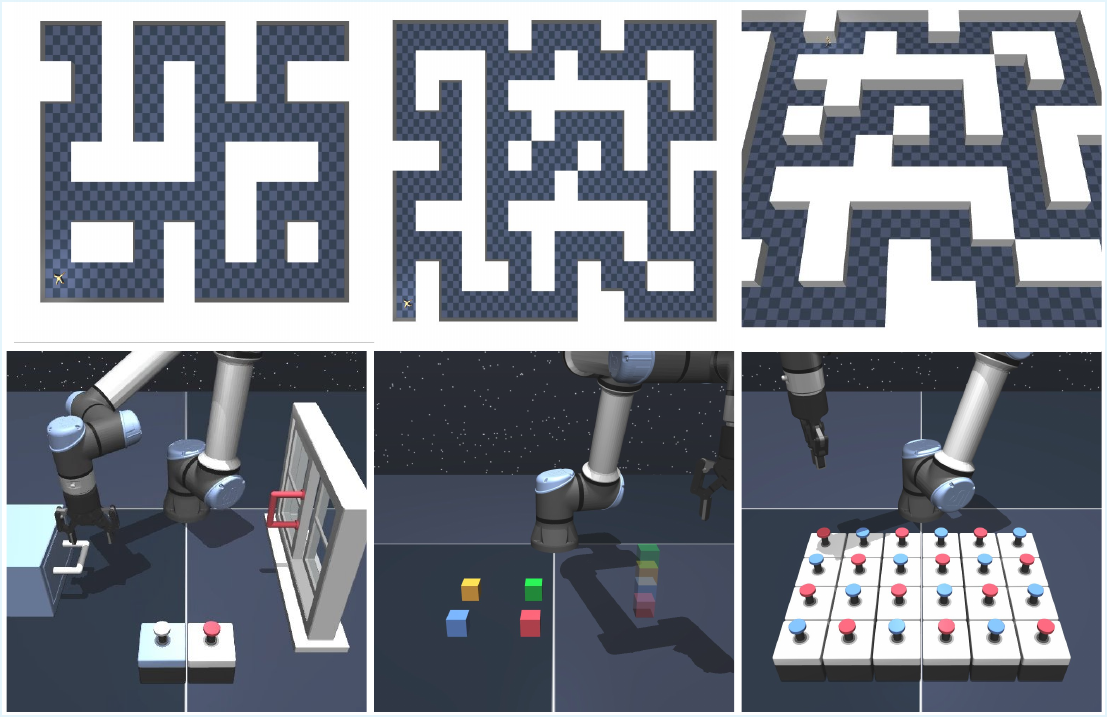}
    \caption{\small \textbf{OGBench Tasks.} We evaluate \texttt{\textbf{HDFlow}} on 14 simulated OGBench tasks.}
    \label{fig:ogbench_tasks}
\end{figure*}

\textbf{Data Generation.} For each task, we generate both successful and failure episode data using different dataset types provided by the original benchmark:
\begin{itemize}
    \item \textbf{Maze Tasks:}
    \begin{itemize}
        \item \texttt{\textbf{navigate}} datasets: These datasets are considered successful and are collected by a noisy expert policy that navigates the maze by repeatedly reaching randomly sampled goals.
        \item \texttt{\textbf{stitch}} datasets: These datasets are considered failure episodes and consist of short goal-reaching trajectories (at most 4 cell units long), designed to challenge the agent's stitching ability.
        \item \texttt{\textbf{explore}} datasets: These datasets are also considered failure episodes, featuring high suboptimality and high coverage, suitable for learning from highly suboptimal data.
    \end{itemize}
    \item \textbf{Manipulation Tasks:}
    \begin{itemize}
        \item \texttt{\textbf{play}} datasets: These datasets are considered successful and are collected by open-loop, non-Markovian scripted policies with temporally correlated action noise to enhance state coverage.
        \item \texttt{\textbf{noisy}} datasets: These datasets are considered failure episodes and are collected by closed-loop, Markovian scripted policies. The degree of action noise is sampled at the beginning of each episode, and trajectories are collected with the chosen amount of (time-independent) Gaussian action noise, ensuring high coverage while having a sufficient number of optimal trajectories. These are also suitable for learning from highly suboptimal data.
    \end{itemize}
\end{itemize}

\newpage
\section{Implementation Details}
\label{app:implementation}

World models learn a compressed representation of the environment's state and a model of its dynamics within this latent space. We use a Recurrent State-Space Model (RSSM)~\citep{pmlr-v97-hafner19a}, which has demonstrated strong performance in modeling complex dynamics from high-dimensional observations. We use observations from both front and wrist RGB-D cameras. The RSSM consists of the following components:
\begin{itemize}
    \item \textbf{RGB Encoder:} A visual encoder leveraging a pretrained DINOv2 model~\citep{oquab2024dinov} for RGB images, mapping them to a lower-dimensional embedding.
    \item \textbf{Depth Encoder:} A separate CNN that encodes depth images into an embedding.
    \item \textbf{State Encoder:} An MLP that encodes the robot's proprioceptive state into an embedding.
    \item \textbf{Combined Observation Embedding:} The embeddings from the RGB, Depth, and State encoders are concatenated to form the high-dimensional observation embedding $e_t = \text{Enc}_\phi(o_t)$.
    \item \textbf{Dynamics Model:} The dynamics are modeled in two parts. A deterministic RNN, $h_{t+1} = f_\phi(h_t, z_t)$, updates its hidden state to summarize the history. This hidden state is then used to predict a prior distribution over the current latent state, $\hat{z}_t \sim p_\phi(\hat{z}_t | h_t)$.
    \item \textbf{Representation Model:} A posterior distribution over the latent state $z_t \sim q_\phi(z_t | h_t, e_t)$ is inferred from the combined observation embedding $e_t$ and the deterministic hidden state $h_t$ of the RNN.
    \item \textbf{Decoder:} A decoder $\hat{o}_t \sim p_\phi(\hat{o}_t | h_t, z_t)$ reconstructs the original observation from the latent state.
\end{itemize}
The model is trained by maximizing the Evidence Lower Bound (ELBO) on the data log-likelihood, which encourages accurate reconstruction and prediction while regularizing the latent space. The objective function is:
\begin{equation}
    \mathcal{L}_{\text{WM}} = \mathbb{E}_{q_\phi(z_{1:T}|o_{1:T})} \left[ \sum_{t=1}^T \left( \log p_\phi(\hat{o}_t|z_t, h_t) - D_{KL}[q_\phi(z_t|h_t, e_t) || p_\phi(\hat{z}_t|h_t)] \right) \right]
\end{equation}
This objective trains the model to form a compressed and predictive latent space $\mathcal{Z}$, where planning can be performed efficiently.

\begin{table*}[htp]
    \centering
    \label{tab:world_model_hyperparameters}
    \begin{tabular}{lc}
        \toprule
        \textbf{Parameter} & \textbf{Value} \\
        \midrule
        Visual Encoder & DINOv2 \\
        RGB Images & 2 \\
        Depth Encoder & Yes \\
        RSSM Stochastic Latent State Size & 32 \\
        RSSM Deterministic State Size & 1024 \\
        Combined Latent State Size & 2048 \\
        Training Epochs & 100 \\
        Batch Size & 64 \\
        Learning Rate & 1e-4 \\
        Loss Weight $\lambda_{WM}$ & 1.0 \\
        Loss Weight $\lambda_{IDM}$ & 0.1 \\
        Loss Weight $\lambda_{\text{contrastive}}$ & 0.1 \\
        \bottomrule
    \end{tabular}
    \caption{World Model Hyperparameters}
\end{table*}

\begin{table*}[tp] 
    \centering
    
    \begin{tabular}{lc}
        \toprule
        \textbf{Parameter} & \textbf{Value} \\
        \midrule
        Model Architecture & Diffusion Transformer (DiT)~\citep{peebles2023scalable} \\
        Layers & 4 \\
        Attention Heads & 8 \\
        Hidden Dimension & 512 \\
        Training Epochs & 10,000 \\
        Batch Size & 64 \\
        Learning Rate & 1e-4 \\
        Diffusion Timesteps & 1000 (linear beta schedule) \\
        EBM Architecture & 4-layer Transformer \\
        Subgoal Interval ($H$) & Task-dependent \\
        Projection Steps & $t \in [T/3, 2T/3]$ \\
        Nearest Neighbors ($k$) & 10 \\
        Projection Variance Retention & $\lambda=0.99$ \\
        Loss Weight $\lambda_{HL}$ & 1.0 \\
        Loss Weight $\lambda_{EBM}$ & 0.1 \\
        Loss Weight $\lambda_{\text{proj}}$ & 0.05 \\
        Inference Steps & 100 \\
        Classifier-Free Guidance Scale & 2.0 \\
        EBM Guidance Scale & 0.1 \\
        Context Conditioning & $z_0, z_G$ \\
        \bottomrule
    \end{tabular}
    \caption{High-Level Planner Hyperparameters}
    \label{tab:high_level_planner_hyperparameters}

    \vspace{2em} 

    \begin{tabular}{l|c|c|c|c}
        \toprule
        \textbf{Task} & \texttt{one\_leg} & \texttt{lamp} & \texttt{round\_table} & \texttt{cabinet} \\
        \midrule
        Subgoal interval ($H$) & 35 & 55 & 75 & 75 \\
        \bottomrule
    \end{tabular}
    \vspace{4pt}
    \caption{Task-dependent high-level planning subgoal intervals ($H$)}
    \label{tab:subgoal}

    \vspace{2em}

    \begin{tabular}{lc}
        \toprule
        \textbf{Parameter} & \textbf{Value} \\
        \midrule
        Model Architecture & Rectified Flow Transformer~\citep{esser2024scaling} \\
        Layers & 4 \\
        Attention Heads & 8 \\
        Hidden Dimension & 512 \\
        Training Epochs & 10,000 \\
        Batch Size & 64 \\
        Learning Rate & 1e-4 \\
        Loss Weight $\lambda_{LL}$ & 1.0 \\
        ODE Solver & Dormand-Prince \\
        Number of Integration Steps & 20 \\
        Context Conditioning & $z_{k-1}, z_k$ \\
        \bottomrule
    \end{tabular}
    \caption{Low-Level Planner Hyperparameters}
    \label{tab:low_level_planner_hyperparameters}
\end{table*}

\newpage

\begin{figure*}[tp] 
\centering

\begin{minipage}{0.95\textwidth} 
\begin{algorithm}[H] 
\caption{\texttt{\textbf{HDFlow}} Training}
\label{alg:hdflow_training}
\begin{algorithmic}[1]
\REQUIRE Dataset $\mathcal{D}$, World Model $\phi$, HL Planner $\theta_{HL}$, LL Planner $\theta_{LL}$, EBM $\phi_{EBM}$
\STATE // \textbf{Stage 1: World Model Learning}
\STATE Initialize RSSM $\phi$, projection head $f(\cdot)$, inverse dynamics model $IDM$.
\FOR {epoch from 1 to $N_{WM}$}
\FOR {batch $(o_{1:T}, a_{1:T})$ from $\mathcal{D}$}
\STATE Encode observations: $(z_{1:T}, h_{1:T}) = \text{RSSM.encode}(o_{1:T})$
\STATE Compute Total Loss: $\mathcal{L}_{\text{WM-total}} = \lambda_{WM} \mathcal{L}_{WM} + \lambda_{IDM}\mathcal{L}_{IDM} + \lambda_{\text{contrastive}} \mathcal{L}_{\text{contrastive}}$
\STATE Update parameters $\phi, f(\cdot), IDM$.
\ENDFOR
\ENDFOR
\STATE // \textbf{Stage 2: Hierarchical Planner Training}
\STATE Initialize HL Diffusion Planner $\epsilon_{\theta_{HL}}$, LL Rectified Flow Planner $v_{\theta_{LL}}$, EBM $E_{\phi_{EBM}}$.
\FOR{epoch from 1 to $N_{Planner}$}
\FOR{batch $(z_0, z_G, (z_k)_{k=1}^K)$ from static dataset}
\STATE Compute Planner Loss: $\mathcal{L}_{\text{planner}} = \lambda_{HL} \mathcal{L}_{HL} + \lambda_{LL} \mathcal{L}_{LL} + \lambda_{EBM} \mathcal{L}_{EBM} + \lambda_{\text{proj}} \mathcal{L}_{\text{projection}}$
\STATE Update parameters $\theta_{HL}$, $\theta_{LL}$, $\phi_{EBM}$.
\ENDFOR
\ENDFOR
\end{algorithmic}
\end{algorithm}
\end{minipage}

\vspace{2em} 

\begin{minipage}{0.95\textwidth}
\begin{algorithm}[H]
\caption{\texttt{\textbf{HDFlow}} Inference and Online Deployment}
\label{alg:hdflow_inference}
\begin{algorithmic}[1]
\REQUIRE $o_{\text{curr}}, o_G$, Trained World Model, HL Planner, LL Planner, $IDM$.
\STATE Encode $o_{\text{curr}} \to z_{\text{curr}}$ and $o_G \to z_G$ using encoder.
\WHILE{task not completed}
\STATE // \textbf{High-Level Planning}
\STATE Generate subgoals $(z_1, \dots, z_K)$ using HL Planner conditioned on $(z_{\text{curr}}, z_G)$.
\STATE Set $z_\text{subgoal} = z_1$.
\STATE // \textbf{Low-Level Planning and Execution}
\STATE Generate trajectory $\tau = (z_{\text{curr}}, \dots, z_\text{subgoal})$ using LL Planner.
\FOR{$t$ from $1$ to $H$}
\STATE Predict action $a_t = IDM(z_t, z_{t+1})$ and execute.
\STATE Update $o_{\text{curr}}$ and $z_{\text{curr}}$.
\ENDFOR
\ENDWHILE
\end{algorithmic}
\end{algorithm}
\end{minipage}

\end{figure*}

\newpage
\section{Baselines}
\label{app:baselines}
In this section, we provide a brief description of each baseline method.

\begin{itemize}
    \item \texttt{\textbf{DP}}~\citep{chi2023diffusion}: A policy learning method that leverages diffusion models to directly model the distribution of actions conditioned on observations, enabling sample-efficient learning from offline data.
    \item \texttt{\textbf{JUICER}}~\citep{ankile2024juicer}: A data-efficient imitation learning framework for robotic assembly that leverages expressive policy architectures, dataset expansion, and simulation-based data augmentation to learn multi-part, long-horizon assembly directly from RGB images.
    \item \texttt{\textbf{Diffuser}}~\citep{janner2022planning}: A diffusion probabilistic model that plans by iteratively denoising trajectories. It treats planning as a conditional generation problem and can produce diverse and high-quality trajectories.
    \item \texttt{\textbf{DD}}~\citep{ajayconditional}: A conditional diffusion model that views decision-making as a conditional generative modeling problem, leveraging classifier-free guidance with low-temperature sampling to extract high-likelihood, return-maximizing trajectories from offline datasets without relying on dynamic programming.
    \item \texttt{\textbf{HDMI}}~\citep{li2023hierarchical}: Hierarchical Diffusion for Offline Decision Making (HDMI) proposes a hierarchical trajectory-level diffusion probabilistic model to tackle challenges in offline reinforcement learning, especially for long-horizon tasks. It employs a cascade framework with a Reward-Conditional Goal Diffuser for discovering subgoals and a Goal-Conditional Trajectory Diffuser for generating action sequences.
    \item \texttt{\textbf{SHD}}~\citep{chensimple}: A hierarchical diffusion-based planning method that uses a "jumpy" planning strategy at the high level for subgoal generation and a low-level diffuser for subgoal achievement, aiming for improved efficiency and generalization in long-horizon tasks.
\end{itemize}

\section{More Ablation Studies}
\label{app:ablation}

\subsection{Choice of Vision Encoder}
To evaluate the impact of the vision encoder on the performance of our world model and the overall \texttt{\textbf{HDFlow}} framework, we conducted an ablation study comparing different visual backbones. We trained the world model with various encoders: a simple \texttt{\textbf{CNN}}, \texttt{\textbf{R3M}}~\citep{nair2022rm}, \texttt{\textbf{VIP}}~\citep{ma2023vip}, \texttt{\textbf{MAE}}~\citep{he2022masked}, and our chosen \texttt{\textbf{DINOv2}}~\citep{oquab2024dinov}. The results are summarized in the table below.

\begin{table*}[h]
    \centering
    \resizebox{0.7\textwidth}{!}{%
    \begin{tabular}{l|ccc|ccc|ccc|cc}
    \toprule
    \texttt{\textbf{Backbone}}& \multicolumn{3}{c}{\texttt{\textbf{one\_leg}}} 
    & \multicolumn{3}{c}{\texttt{\textbf{lamp}}} 
    & \multicolumn{3}{c}{\texttt{\textbf{round\_table}}} 
    & \multicolumn{2}{c}{\texttt{\textbf{cabinet}}} \\
    \cmidrule(lr){2-4} \cmidrule(lr){5-7} \cmidrule(lr){8-10} \cmidrule(lr){11-12}
     & \texttt{\textbf{Low}} & \texttt{\textbf{Med}} & \texttt{\textbf{High}} & \texttt{\textbf{Low}} & \texttt{\textbf{Med}} & \texttt{\textbf{High}} & \texttt{\textbf{Low}} & \texttt{\textbf{Med}} & \texttt{\textbf{High}} & \texttt{\textbf{Low}} & \texttt{\textbf{Med}}\\
    \midrule
    \texttt{\textbf{CNN}} & 55 & 18 & 2 & 28 & 8 & 0 & 20 & 6 & 0 & 3 & 0 \\
    \texttt{\textbf{R3M}} & 68 & 25 & 5 & 39 & 15 & 4 & 29 & 10 & 2 & 7 & 1 \\
    \texttt{\textbf{VIP}} & 75 & 32 & 9 & 45 & 20 & 7 & 36 & 14 & 5 & 10 & 3 \\
    \texttt{\textbf{MAE}} & 83 & 45 & 18 & 57 & 33 & 10 & 48 & 25 & 8 & 15 & 6 \\
    \rowcolor[HTML]{a6e3e9}
    \texttt{\textbf{DINOv2}} & \textbf{92} & \textbf{71} & \textbf{39} & \textbf{68} & \textbf{49} & \textbf{34} & \textbf{61} & \textbf{43} & \textbf{27} & \textbf{55} & \textbf{36} \\
    \bottomrule
    \end{tabular}
    }
    \vspace{5pt}
    \caption{\small Ablation study on the choice of vision encoder for the world model. This table reports \texttt{\textbf{HDFlow}} success rates (\%) on all tasks with respective randomization levels.}
    \label{tab:vision_encoder_ablation}
    \vspace{-8pt}
\end{table*}
We found out that \texttt{\textbf{DINOv2}} leads to significantly more accurate and detailed reconstructions of observations, which translates to a richer and more semantically structured latent space. This improved latent representation is crucial for both the high-level diffusion planner and the low-level rectified flow planner, enabling them to generate more effective and feasible plans. The quantitative results in Table~\ref{tab:vision_encoder_ablation} further confirm that \texttt{\textbf{DINOv2}} consistently yields the highest \texttt{\textbf{HDFlow}} success rates, underscoring its importance as a foundational component of our framework.

\subsection{Ablation Studies on $\mathcal{L}_{\text{IDM}}$}
To evaluate the specific contribution of the Inverse Dynamics Model (IDM) loss (Eq. 8), we conducted an ablation study where we effectively disabled its dedicated training from the world model objective (i.e., setting $\lambda_{IDM} = 0$). While the IDM architecture remains within the framework and is called during inference, without its specific training objective, it cannot learn to accurately translate latent state transitions into executable actions. The results for the \texttt{\textbf{one\_leg}} and \texttt{\textbf{lamp}} tasks under \texttt{\textbf{Low}} randomization are presented in Table~\ref{tab:ablation_L_IDM}.

\begin{table*}[h]
    \centering
    \resizebox{0.45\textwidth}{!}{%
    \begin{tabular}{lcc}
    \toprule
    \texttt{\textbf{Method}} & \texttt{\textbf{one\_leg}} & \texttt{\textbf{lamp}} \\\\
    \midrule
    \rowcolor[HTML]{a6e3e9}
    \texttt{\textbf{Ours}} & $\mathbf{92}$ & $\mathbf{68}$ \\\\
    \texttt{\textbf{w/o $\mathcal{L}_{IDM}$}} & $34$ & $14$ \\\\
    \texttt{\textbf{w/o RSSM (direct DINOv2 latents)}} & $45$ & $27$ \\\\
    \bottomrule
    \end{tabular}
    }
    \vspace{5pt}
    \caption{\small Ablation study on the contribution of the Inverse Dynamics Model (IDM) loss as well as directly using DINOv2 features.}
    \label{tab:ablation_L_IDM}
    \vspace{-8pt}
\end{table*}

As shown in Table~\ref{tab:ablation_L_IDM}, removing the $\mathcal{L}_{IDM}$ leads to a noticeable drop in performance. This indicates that explicitly training the world model to predict actions between latent states indeed enhances the quality of the latent representation for planning. Without this objective, the latent space might become less sensitive to fine-grained action dynamics, making it harder for the low-level rectified flow planner to generate precise trajectories.

\subsection{Using DINOv2 features as latent states}
We investigate the potential for directly using \texttt{\textbf{DINOv2}}~\citep{oquab2024dinov} features as latent states \texttt{z} without training an additional RSSM world model. Our current framework uses a pre-trained \texttt{\textbf{DINOv2}} model as an RGB encoder \textit{within} an RSSM-based world model. The RSSM then learns a compressed, recurrent latent state that captures temporal dynamics and reconstructs observations.

We conducted an experiment where we directly used the output of the \texttt{\textbf{DINOv2}} encoder as the latent state \texttt{z}, bypassing the RSSM. In this setup, the high-level diffusion planner and low-level rectified flow planner would operate directly on these raw \texttt{\textbf{DINOv2}} embeddings. The results are as shown in Table~\ref{tab:ablation_L_IDM}. The performance degradation is significant when the RSSM is removed, and \texttt{\textbf{DINOv2}} features are used directly as latent states. This can be attributed to several factors:
\begin{enumerate}
    \item \textbf{Lack of Temporal Dynamics:} \texttt{\textbf{DINOv2}}, while excellent for static visual representation, does not inherently capture temporal dynamics. The RSSM is crucial for learning a recurrent state that summarizes the history of observations and predicts future states, which is vital for long-horizon planning.
    \item \textbf{Unstructured Latent Space:} While \texttt{\textbf{DINOv2}} provides semantically rich visual features, its latent space is not explicitly structured for planning in terms of task progress or action relevance. The RSSM, especially with the contrastive and IDM objectives, is specifically designed to create a latent space that is optimized for downstream planning.
    \item \textbf{Dimensionality and Noise:} Raw \texttt{\textbf{DINOv2}} features might be higher dimensional or contain more irrelevant noise for planning compared to the compressed and refined latent states learned by the RSSM. The RSSM acts as a bottleneck and a learning mechanism to extract the most pertinent information for control.
\end{enumerate}
In conclusion, while \texttt{\textbf{DINOv2}} serves as an excellent visual backbone, the RSSM world model plays a critical role in learning a temporally consistent, structured, and compact latent space that is essential for effective hierarchical planning in long-horizon robotic tasks.

\section{Successful Rollouts}
\label{app:results}
\begin{figure*}[htp]
    \centering
    \includegraphics[width=0.8\linewidth]{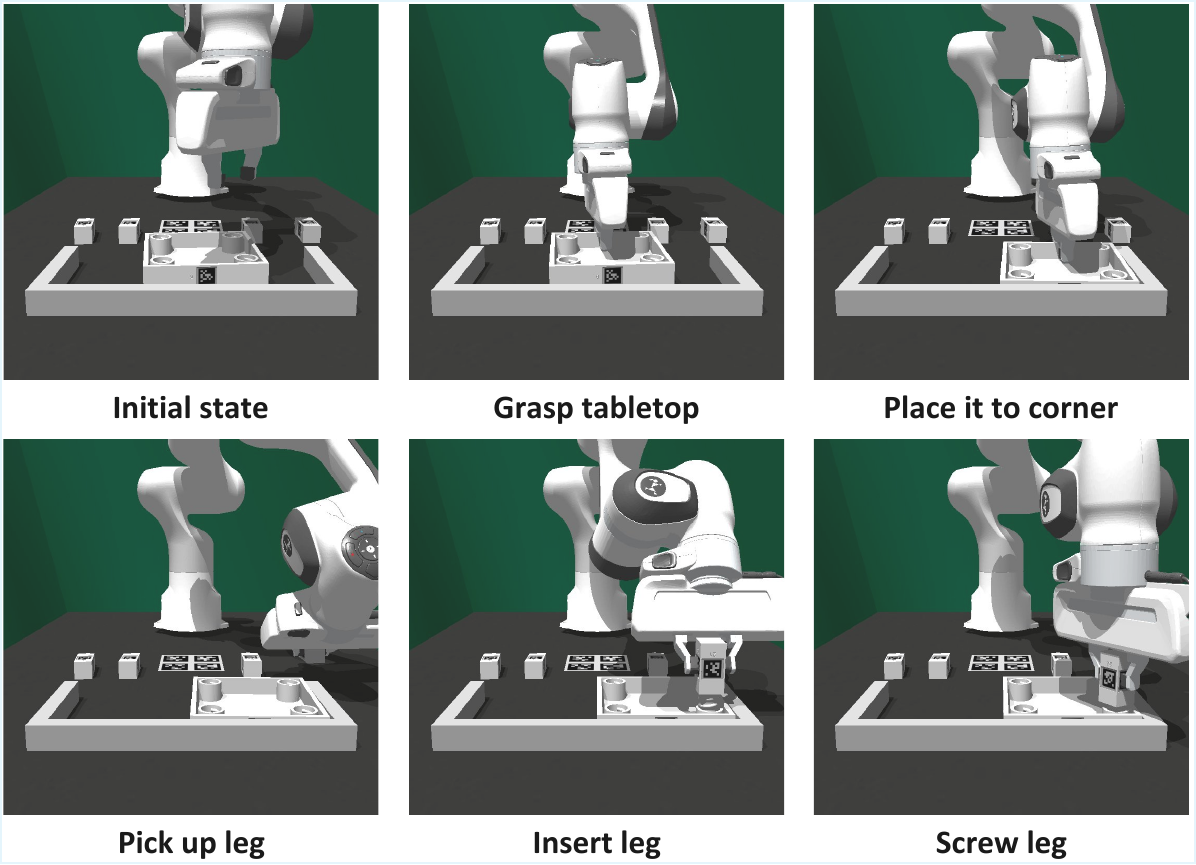}
    \caption{\small A successful rollout of \texttt{\textbf{HDFlow}} planner on the \texttt{\textbf{one\_leg}} assembly task initialized with \texttt{\textbf{Low}} randomness.}
    \label{fig:one_leg_low_rollout}
\end{figure*}

\begin{figure*}[htp]
    \centering
    \includegraphics[width=0.8\linewidth]{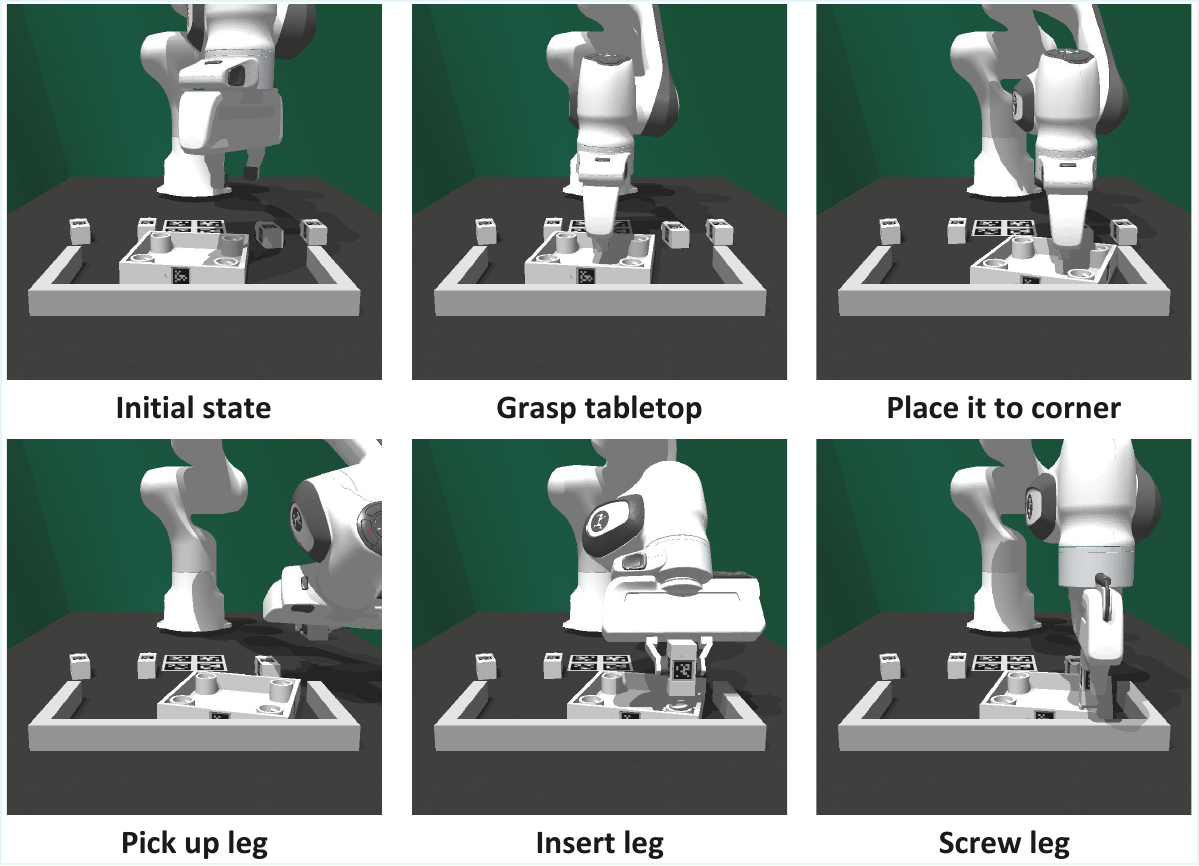}
    \caption{A successful rollout of \texttt{\textbf{HDFlow}} planner on the \texttt{\textbf{one\_leg}} assembly task initialized with \texttt{\textbf{Med}} randomness.}
    \label{fig:one_leg_med_rollout}
\end{figure*}

\begin{figure*}[htp]
    \centering
    \includegraphics[width=0.8\linewidth]{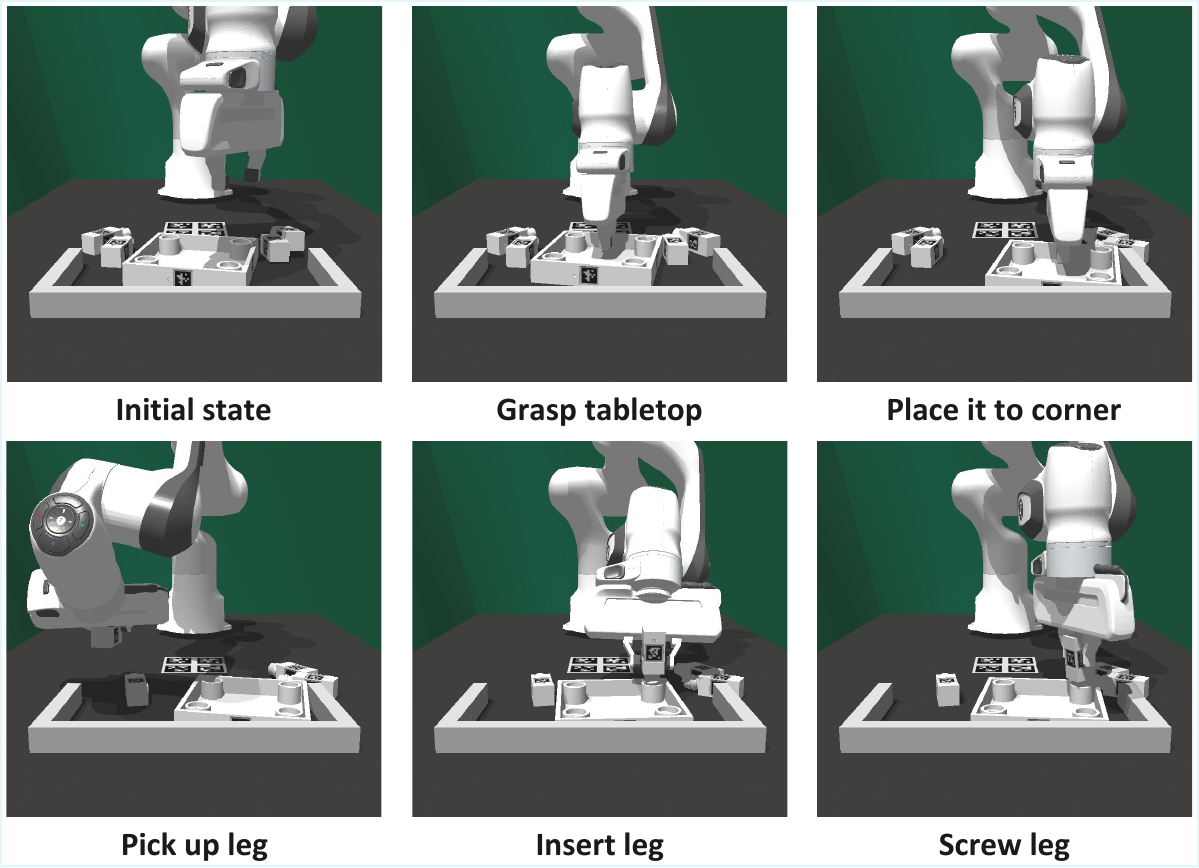}
    \caption{A successful rollout of \texttt{\textbf{HDFlow}} planner on the \texttt{\textbf{one\_leg}} assembly task initialized with \texttt{\textbf{High}} randomness.}
    \label{fig:one_leg_high_rollout}
\end{figure*}

\begin{figure*}[htp]
    \centering
    \includegraphics[width=\linewidth]{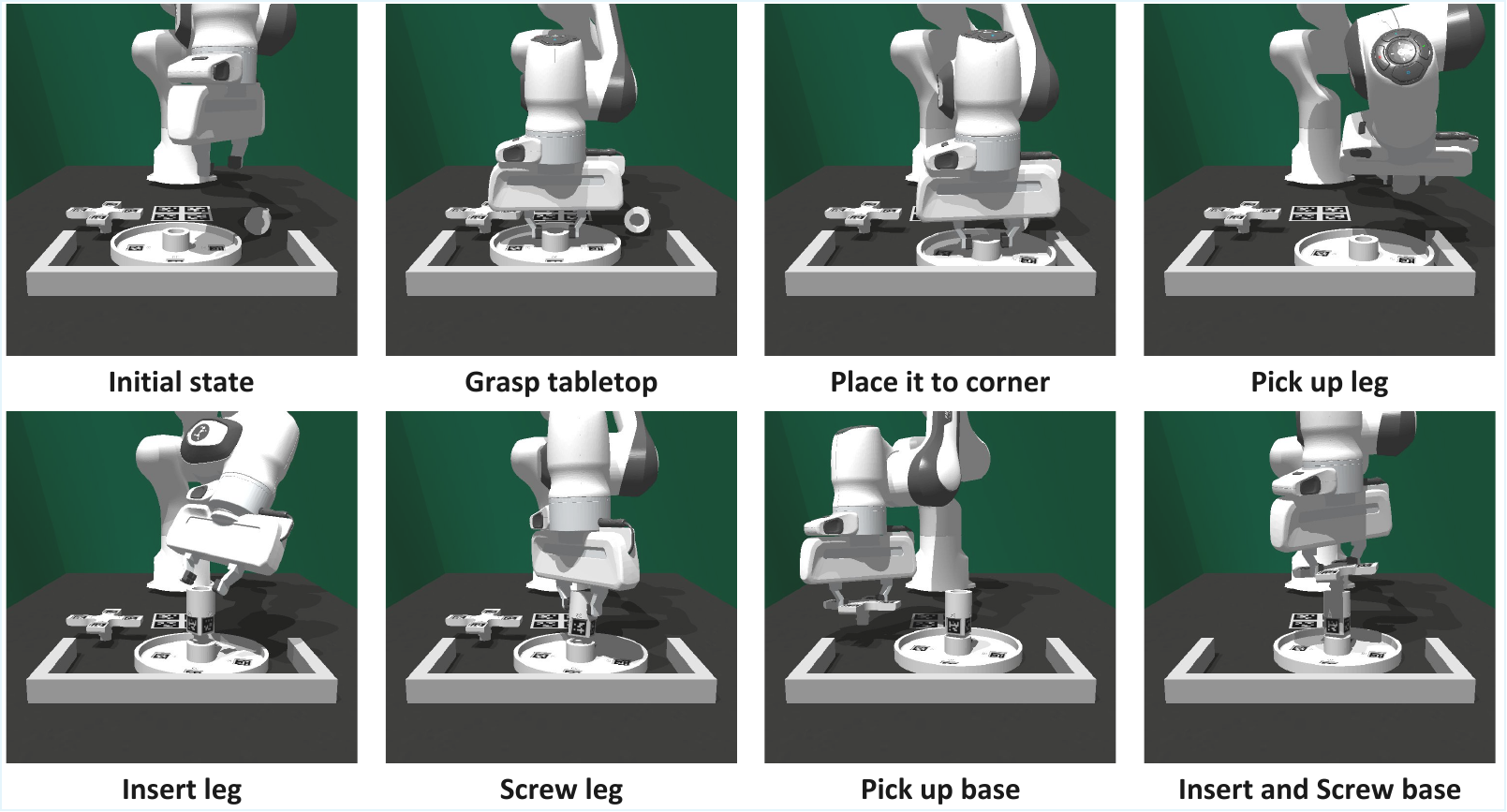}
    \caption{A successful rollout of \texttt{\textbf{HDFlow}} planner on the \texttt{\textbf{round\_table}} assembly task initialized with \texttt{\textbf{Low}} randomness.}
    \label{fig:round_low_rollout}
\end{figure*}

\begin{figure*}[htp]
    \centering
    \includegraphics[width=\linewidth]{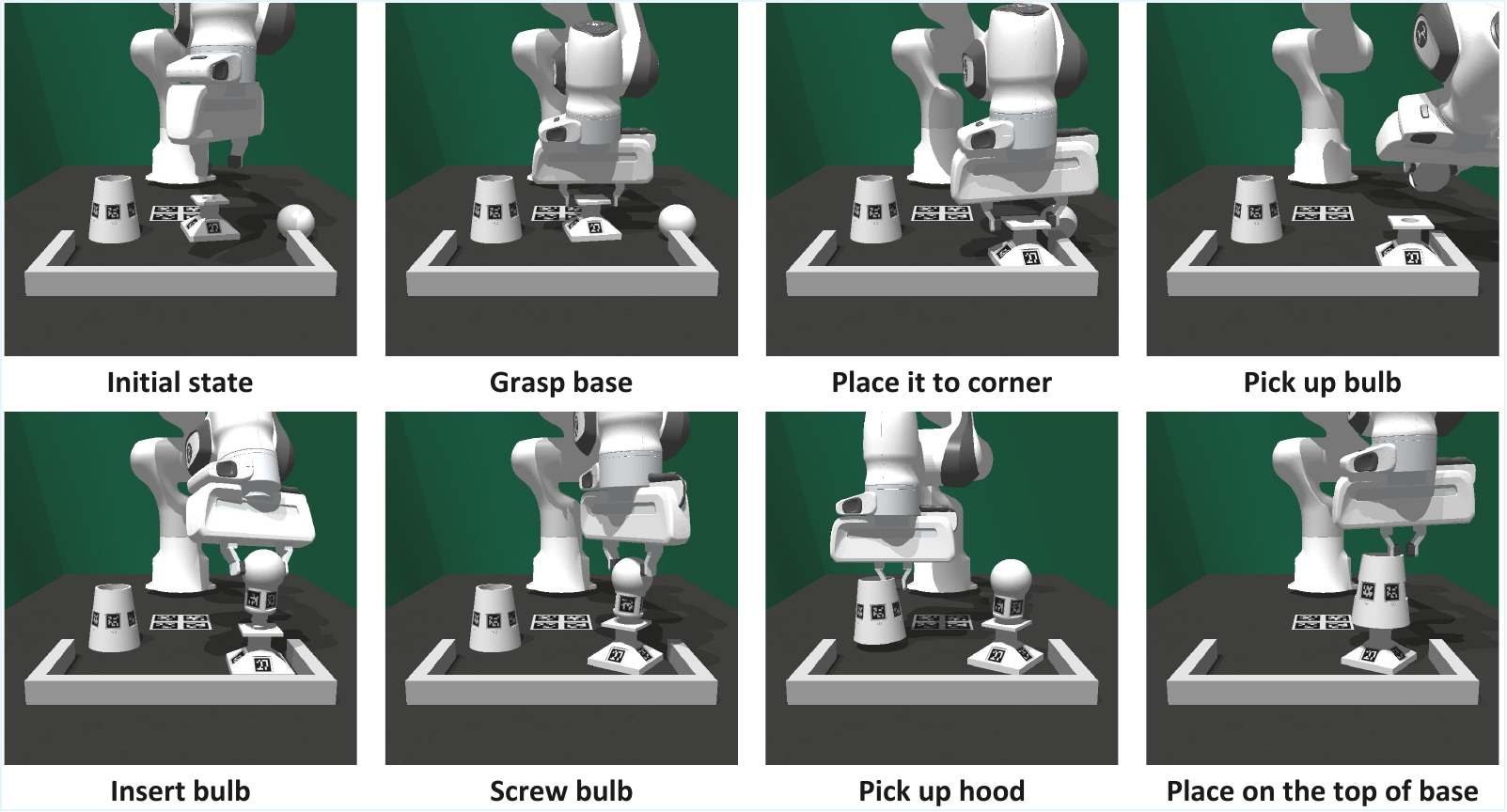}
    \caption{A successful rollout of \texttt{\textbf{HDFlow}} planner on the \texttt{\textbf{lamp}} assembly task initialized with \texttt{\textbf{Low}} randomness.}
    \label{fig:lamp_low_rollout}
\end{figure*}

\begin{figure*}[htp]
    \centering
    \includegraphics[width=\linewidth]{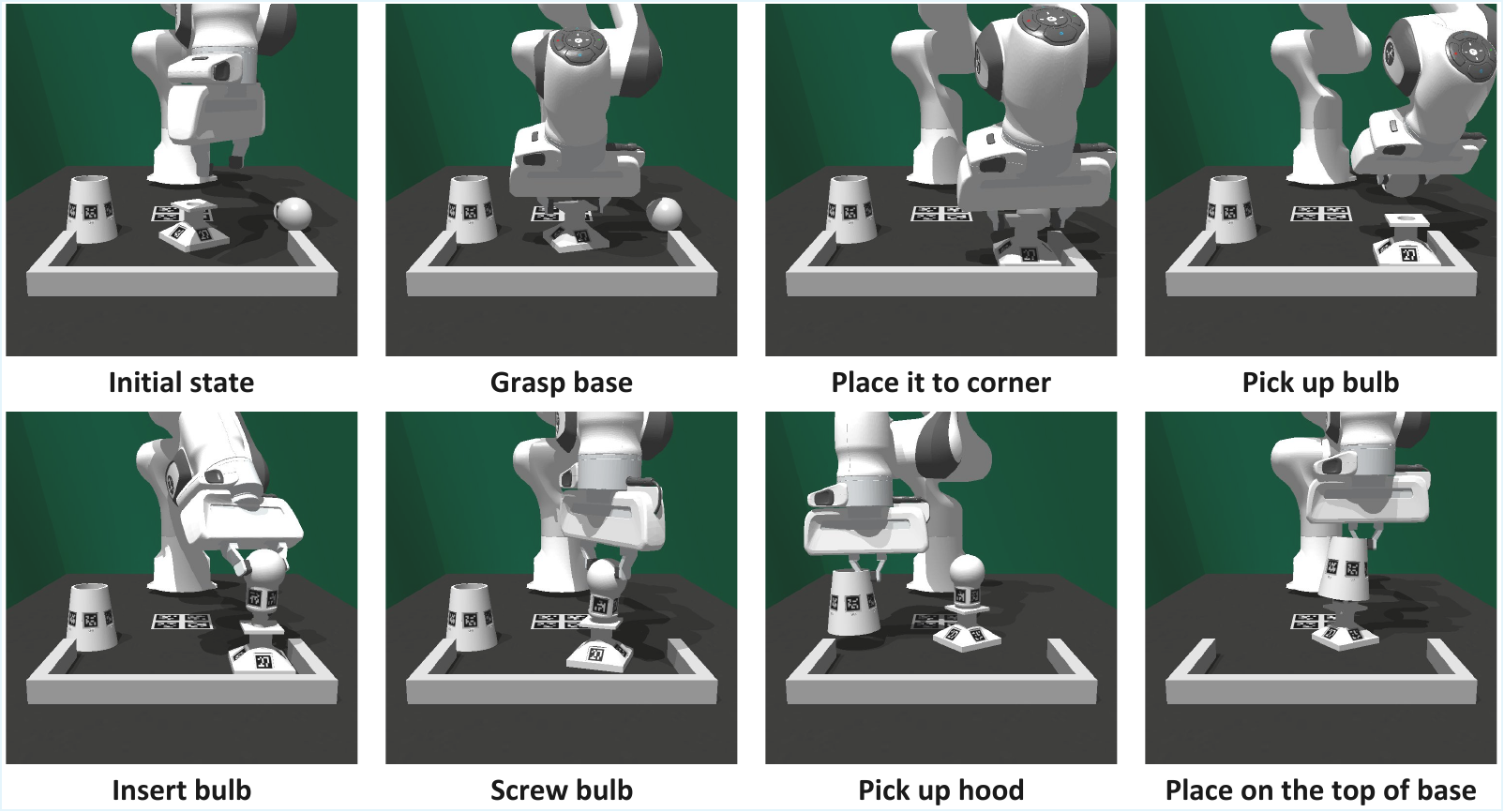}
    \caption{A successful rollout of \texttt{\textbf{HDFlow}} planner on the \texttt{\textbf{lamp}} assembly task initialized with \texttt{\textbf{Med}} randomness.}
    \label{fig:lamp_med_rollout}
\end{figure*}

\begin{figure*}[htp]
    \centering
    \includegraphics[width=\linewidth]{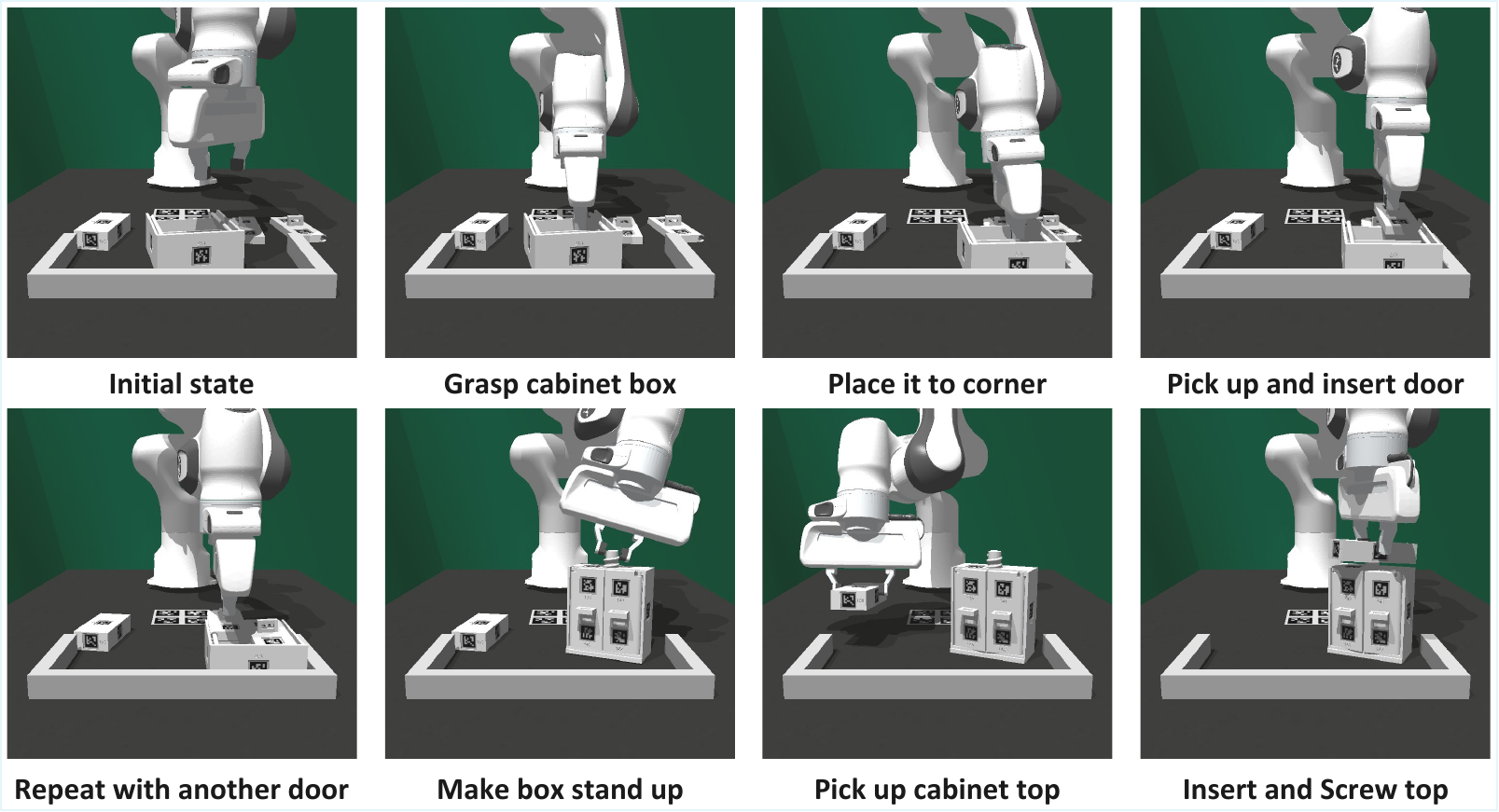}
    \caption{A successful rollout of \texttt{\textbf{HDFlow}} planner on the \texttt{\textbf{cabinet}} assembly task initialized with \texttt{\textbf{Low}} randomness.}
    \label{fig:cabinet_low_rollout}
\end{figure*}

\begin{figure}[t] 
    \centering
    \includegraphics[width=0.7\linewidth]{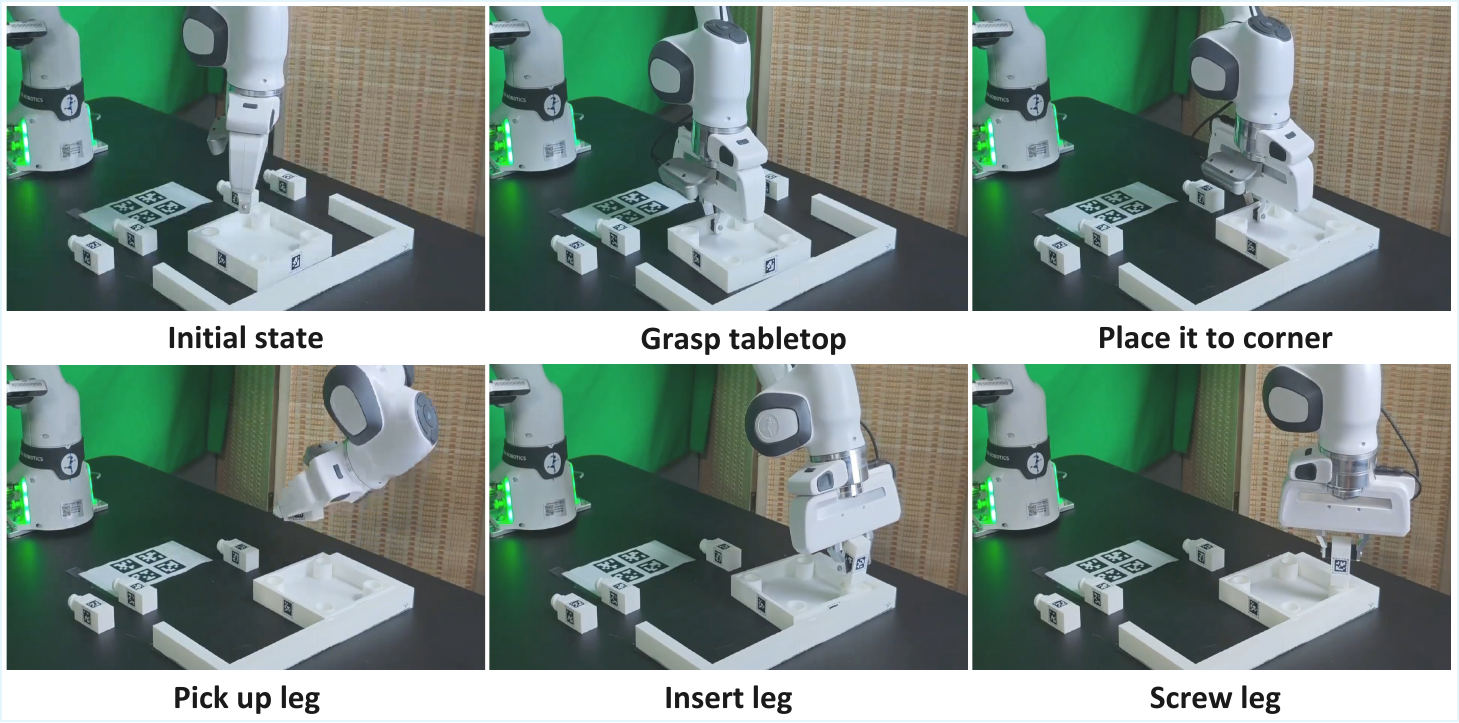}
    \caption{A successful rollout of \texttt{\textbf{HDFlow}} planner on the \texttt{\textbf{one\_leg}} assembly task initialized with \texttt{\textbf{Low}} randomness in the real-world.}
    \label{fig:one_leg_low_real_rollout}
    
    \vspace{1.5em} 

    \includegraphics[width=0.8\linewidth]{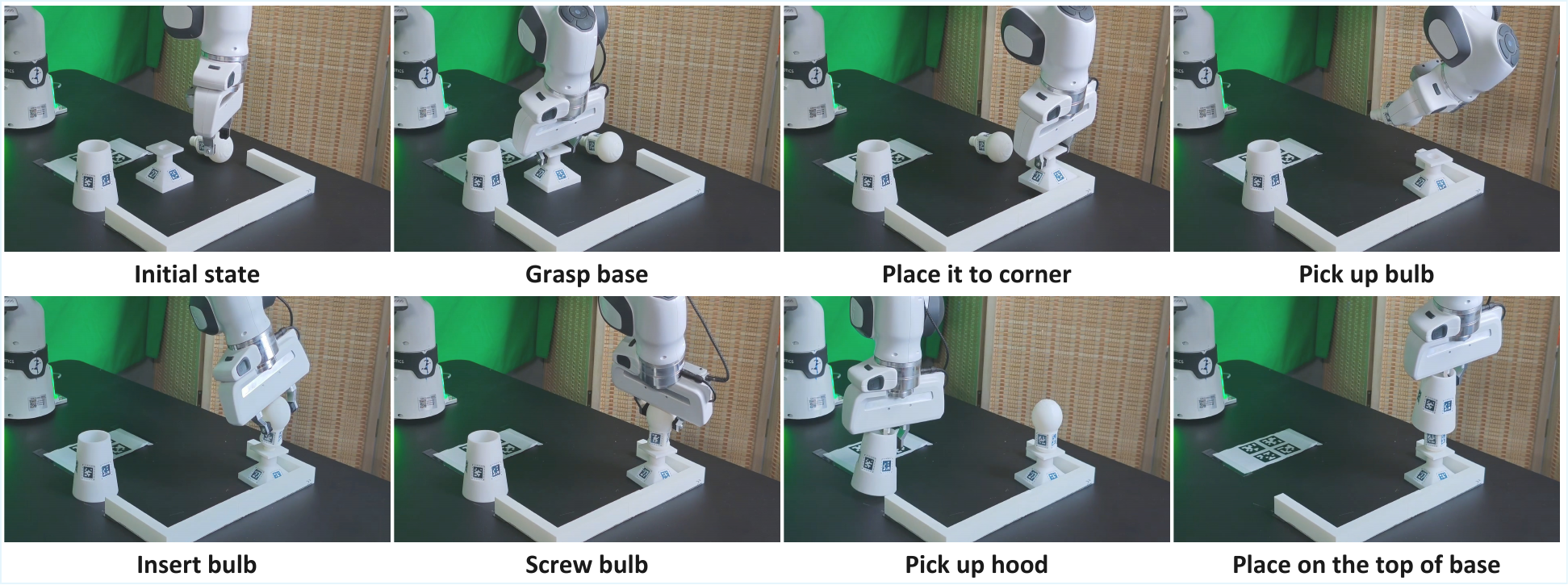}
    \caption{A successful rollout of \texttt{\textbf{HDFlow}} planner on the \texttt{\textbf{lamp}} assembly task initialized with \texttt{\textbf{Low}} randomness in the real-world.}
    \label{fig:lamp_low_real_rollout}
    
    \vspace{1.5em}

    \includegraphics[width=0.8\linewidth]{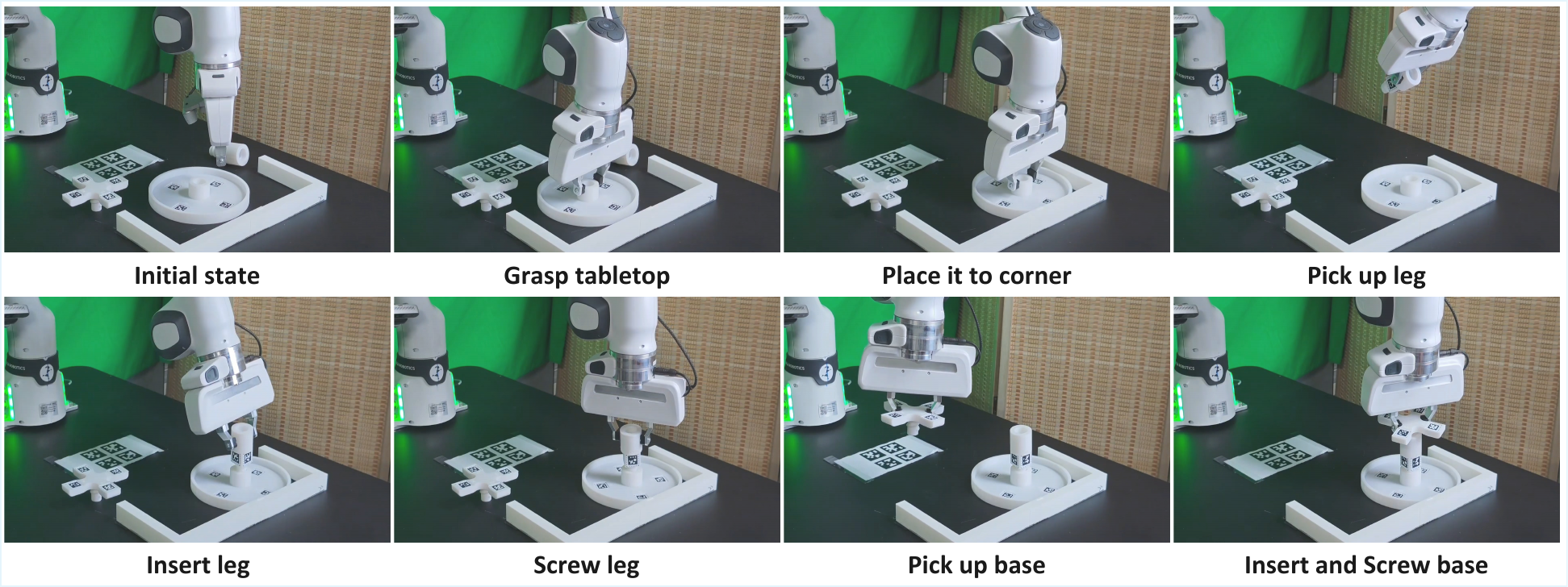}
    \caption{A successful rollout of \texttt{\textbf{HDFlow}} planner on the \texttt{\textbf{round\_table}} assembly task initialized with \texttt{\textbf{Low}} randomness in the real-world.}
    \label{fig:round_low_real_rollout}
    
\end{figure}

\end{document}